\documentclass[10pt, logo, onecolumn, copyright]{nvidiatechreport}


\usepackage{amsmath,amsfonts,bm}









\def\eqref#1{equation~\ref{#1}}









\def\1{\bm{1}}










\DeclareMathAlphabet{\mathsfit}{\encodingdefault}{\sfdefault}{m}{sl}
\SetMathAlphabet{\mathsfit}{bold}{\encodingdefault}{\sfdefault}{bx}{n}













\usepackage{mdframed}
\usepackage[utf8]{inputenc}
\usepackage[T1]{fontenc}

\usepackage{amsfonts}
\usepackage{nicefrac}
\usepackage{microtype}
\usepackage[dvipsnames]{xcolor}
\usepackage{multirow}
\usepackage{multicol}
\usepackage{graphicx}
\usepackage{tabto}
\usepackage{xspace}
\usepackage{amsmath}
\usepackage{adjustbox}
\usepackage{enumitem}
\usepackage{wrapfig}
\usepackage{dblfloatfix}
\usepackage{footmisc}
\usepackage{float}
\usepackage{placeins}
\usepackage[numbers]{natbib}
\usepackage{url}
\usepackage{booktabs}
\usepackage{tabularx}
\usepackage{arydshln}
\usepackage{colortbl}
\usepackage{makecell}
\usepackage{hhline}
\usepackage{array}
\usepackage{diagbox}
\usepackage{fp}
\usepackage{subfigure}
\usepackage{xurl}
\usepackage{amssymb}
\usepackage{mathtools}
\usepackage{amsthm}
\usepackage[capitalize,noabbrev]{cleveref}

\theoremstyle{plain}
\newtheorem{theorem}{Theorem}[section]

\theoremstyle{definition}

\theoremstyle{remark}

\definecolor{blue1}{RGB}{0,112,192}
\definecolor{darkgreen}{RGB}{30,150,30}
\definecolor{darkblue}{RGB}{0,0,127}
\definecolor{darkyellow}{RGB}{171,133,0}
\definecolor{darkred}{RGB}{180,20,20}
\definecolor{darkmagenta}{RGB}{127,0,127}
\definecolor{darkcyan}{RGB}{0,127,127}
\definecolor{chromeyellow}{rgb}{1.0, 0.65, 0.0}
\definecolor{amber}{rgb}{1.0, 0.75, 0.0}
\newcommand{\best}[1]{\textbf{#1}}
\newcommand{\secondbest}[1]{\underline{#1}}

\title{On Equivariance and Fast Sampling in Video Diffusion Models Trained with Warped Noise}

\author{Chao Liu \& Arash Vahdat}

\begin{abstract}
Temporally consistent video-to-video generation is critical for applications such as style transfer and upsampling. In this paper, we provide a theoretical analysis of warped noise—a recently proposed technique for training video diffusion models—and show that pairing it with the standard denoising objective implicitly trains models to be equivariant to spatial transformations of the input noise. We term such models \emph{EquiVDM}. This equivariance enables motion in the input noise to align naturally with motion in the generated video, yielding coherent, high-fidelity outputs without the need for specialized modules or auxiliary losses. A further advantage is sampling efficiency: EquiVDM achieves comparable or superior quality in far fewer sampling steps. When distilled into one-step student models, EquiVDM preserves equivariance and delivers stronger motion controllability and fidelity than distilled non-equivariant baselines. Across benchmarks, EquiVDM consistently outperforms prior methods in motion alignment, temporal consistency, and perceptual quality, while substantially lowering sampling cost.
\end{abstract}

\begin{document}

\maketitle

\section{Introduction}
\label{sec:intro}

\begin{wrapfigure}{r}{0.5\textwidth}
    \vspace{-1.2em}
    \centering
    \includegraphics[width=\linewidth]{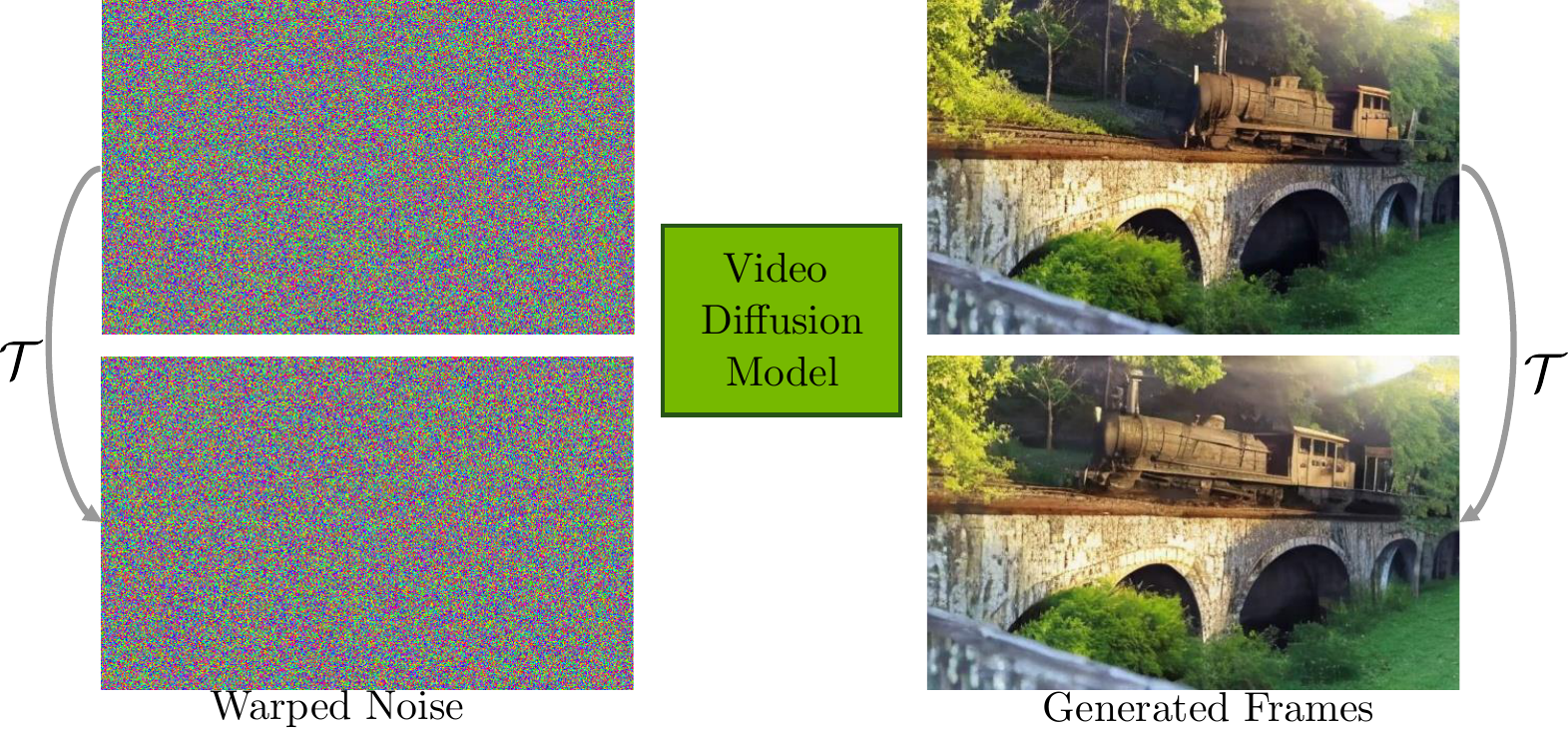}
    \caption{
        EquiVDM:
        A video diffusion model that is equivariant to input spatial transformations generates videos with the same spatial transformation when provided with warped noise.
    }
    \label{fig:teaser}
    \vspace{-0.5em}
\end{wrapfigure}
Video-to-video generative models power a broad spectrum of applications, from sim-to-real transfer and style adaptation to generative rendering and video upsampling. Among these, diffusion-based approaches have emerged as the de facto standard for conditional video generation~\citep{esser2024scaling,brooks2024sora,veo2024,nvidia2025cosmos,blattmann2023stable,blattmann2023videoldm,peebles2023scalable}. Following the original formulations of image and video diffusion models~\citep{peebles2023scalable,ho2020denoising,song2020score,ho2022video}, current methods adopt independent Gaussian noise in their forward process. To enforce temporal consistency, they typically augment the architecture with 3D convolutions~\citep{blattmann2023stable,yang2024cogvideox} or spatiotemporal attention layers~\citep{peebles2023scalable}, enabling stronger propagation of motion information across frames. While these architectural enhancements improve coherence, they often demand large-scale, high-quality video datasets~\citep{nvidia2025cosmos,chen2024panda70m} to learn realistic appearance and motion dynamics from unstructured noise.

An alternative line of research seeks to achieve temporal consistency by directly sampling from temporally warped noise. This approach is particularly attractive for video-to-video tasks, where an \textit{input video} naturally provides the motion cues needed to drive the noise warping. In practice, motion vectors (e.g., optical flow) are extracted from the input video and used to correlate Gaussian noise along the motion trajectories. Several works~\citep{chang2024warped,Daras2024WarpDiffusion,Deng2024IRINW} exploit this idea by warping noise across frames while preserving its spatial Gaussianity, and then applying a pretrained \textit{image} diffusion model to denoise the warped noise, thereby inducing temporally consistent transformations in the output frames. However, as noted by~\citet{Daras2024WarpDiffusion}, standard image diffusion networks are not intrinsically equivariant to noise-warping transformations because of their highly nonlinear layers. As a result, these methods often rely on sampling-time guidance or regularization to approximate equivariance—introducing extra hyperparameters and added complexity. More recently,~\citet{burgert2025gowiththeflow} showed that fine-tuning video diffusion models (VDMs) with warped noise for video-to-video generation tasks can enhance motion control, further underscoring the potential of this direction.

In an era dominated by large video diffusion models trained with independent Gaussian noise, we ask two key questions: (1) What role does warped noise play in training video-to-video diffusion models? and (2) What practical benefits does it provide? We show—both theoretically and empirically—that training with warped noise induces equivariance to spatial warping transformations, a property that emerges simply by replacing independent Gaussian noise with warped noise in the forward process, without altering the conventional VDM objective (see \cref{fig:teaser}). Unlike prior approaches that introduce specialized modules \citep{khachatryan2023text2video,chen2023mcdiff,zhang2023adding,lin2024ctrl,wang2024videocomposer,wu2024draganything,Karras2021}, our findings reveal that equivariance is an inherent consequence of noise warping. We refer to such models as \emph{EquiVDMs}. Beyond theory, we demonstrate that EquiVDMs generate temporally coherent videos in fewer sampling steps without degrading visual quality. To further accelerate inference, we introduce a distribution-matching distillation method that trains a one-step student EquiVDM with warped noise. This distilled model achieves superior temporal coherence, motion control, and frame quality compared to distilled state-of-the-art baselines trained with independent Gaussian noise. These results are especially significant for real-world applications of video-to-video diffusion, such as sim-to-real transfer, where real-time generation is essential.

In summary, our contributions are:
(i) We introduce EquiVDM, a video diffusion model inherently equivariant to spatial warping of the input noise, and show that it can be trained with warped noise using the standard video denoising loss—without requiring any additional regularization.
(ii) We demonstrate that EquiVDM produces videos with superior motion fidelity and visual quality compared to state-of-the-art methods. Notably, even the base EquiVDM outperforms existing models that rely on extra modules to encode per-frame dense conditions. Incorporating such control modules (e.g., soft-edge conditions) further improves performance.
(iii) We show that EquiVDM with warped noise achieves high-quality video generation in very few sampling steps, enabling faster inference without sacrificing fidelity.
(iv) We propose a distribution-matching distillation method that leverages warped noise to train a one-step student EquiVDM for video-to-video generation. We empirically validate that the distilled model preserves equivariance to input warping and delivers stronger motion control and frame quality than distilled state-of-the-art baselines trained with independent Gaussian noise.

\section{Related works}
\label{sec:related}

\textbf{Controllable video generation}

Controllable video generation extends image-generation methods by leveraging additional constraints to guide generation. Prior works incorporate dense frame-wise signals such as depth or edge maps by adding modules to text-to-video backbones or by introducing temporal blocks to capture motion \citep{chen2023control,khachatryan2023text2video,lin2024ctrl,wang2024videocomposer}. For user-defined sparse trajectories (e.g., drag-and-drop), researchers encode these trajectories through auxiliary modules or flow-completion strategies, then fuse them into the diffusion model's latent features \citep{li2024image,yin2023dragnuwa,chen2023mcdiff,wu2024draganything}. Some approaches refine alignment with 2D Gaussian or bounding-box constraints, bypassing the need for an initial frame or applying sampling-time guidance to precisely follow the specified motion \citep{feng2025blobgen,namekata2024sgi2v}.

\textbf{Taming noise for rendering and generation}
Generating noise with specific properties such as independence and temporal consistency is a crucial step for diffusion model based
video generation, as well as rendering in graphics.
For example, \citet{Wolfe2021SpatialTemporalBlueNoise}
improve the rendering efficiency and stability by introducing a spatiotemporal noise generation pipeline for stochastic rendering.
\citet{Kass2011CoherentNoiseForNonPhotorealisticRendering}
propose a fast coherent noise generation method for non-photorealistic rendering.
\citet{Corsini2012EfficientBlueNoise, Goes2012BlueNoiseThroughOptimalTransport}
focus on 2D blue noise generation for more efficient ray-tracing based rendering pipeline.
\citet{Huang2024BlueNoiseForDiffusionModels}
extend the blue noise generation to the diffusion model based video generation given that the blue noise
preserves more high-frequency information than Gaussian noise. \citet{ge2023preserve} study the noise prior and introduce temporally correlated noise in video diffusion without any spatial transformation.
\mbox{\citet{luo2023videofusion,zhang2024trip}} 
explore the residual noise between frames for video generation with more temporal consistency.
In \mbox{\citep{lu2024improve}} the temporal correlation of the noise for video generation is modeled directly to improve temporal consistency.

Getting consistent Gaussian noise for image sequence and video generation using diffusion models has been getting more attention recently.
\citet{chang2024warped}
introduce a warping-based Gaussian noise generation method based on conditional upsampling for image sequence generation.
The warped noise theoretically preserves Gaussianity for each frame while being temporally consistent across frames.
\citet{Deng2024IRINW}
improve the efficiency of the warping-based method by operating directly in
the continuous domain thus avoiding the need for conditional upsampling.  
\citet{Daras2024WarpDiffusion}
proposes a consistent Gaussian noise generation method alternatively based on Gaussian process.

Recently, \citet{Yan2025CFD} and \citet{Burgert2025GoWTF}
utilize the consistent noise for 3D asset and video generation.
More specifically, \citet{Yan2025CFD} propose a method for text-to-3D generation 
by distilling from a pretrained image diffusion model using multi-view consistent noise.
\citet{Burgert2025GoWTF} finetune a pretrained video diffusion model using warped
noise and empirically show that it achieves better motion control.
In this work, we theoretically and empirically show that the equivariance to the
warping transformation of the input noise can be learned by using the original
loss without any modification or new modules. 
In addition to improved motion controllability, we show that the EquiVDM requires fewer sampling steps,
and propose a distillation method using warped noise to train a one-step student for video-to-video generation tasks with superior
performance compared to the distilled models without equivariance.

\section{Preliminary}
\label{sec:preliminary}
\textbf{Video Diffusion Model.}~
We represent a video as a sequence of frames 
$\mathbf{V} = (V^{(0)}, V^{(1)}, \dots, V^{(K)})$, 
where $V^{(k)}$ denotes the $k$-th frame. Input conditions such as text prompts or control frames are denoted by $\mathbf{c}$. A video diffusion model $D_\theta(\mathbf{V}_t; \mathbf{c}, t)$ is trained to recover the clean video $\mathbf{V}$ from its noisy counterpart
$\mathbf{V}_t = \big(V^{(0)} + n^{(0)}, \, V^{(1)} + n^{(1)}, \, \dots, \, V^{(K)} + n^{(K)}\big)$,
where $n^{(k)} \sim \mathcal{N}(\mathbf{0}, t\mathbf{I})$ is Gaussian noise added to the $k$-th frame. For brevity, we omit $\mathbf{c}$ and $t$ in $D_\theta(\mathbf{V}_t; \mathbf{c}, t)$ in the following. The model is optimized using the standard denoising loss:
\begin{equation}
    \mathcal{L} = 
    \mathbb{E}_{p(t)\, p(\mathbf{V}, \mathbf{V}_t)} 
    \big\| D_\theta(\mathbf{V}_t) - \mathbf{V} \big\|_2^2 
    = 
    \mathbb{E}_{p(t)\, p(\mathbf{V}, \mathbf{V}_t)} 
    \sum_{k} 
    \big\| D_\theta^{(k)}(\mathbf{V}_t) - V^{(k)} \big\|_2^2,
    \label{eq:denoising_loss}
\end{equation}
where $p(t)$ is the noise distribution, and the right-hand side expands the loss across frames. After training, a video can be generated by iteratively denoising samples from Gaussian noise following the sampling schedule.

\textbf{Noise Warping.}~Successive video frames exhibit temporal consistency, and in visible regions two adjacent frames $V$ and $V'$ can be related by a linear warping operation $V' = \mathcal{T} \circ V$. Recent works extend this idea to the Gaussian noise used in diffusion models. \citet{Daras2024WarpDiffusion} model noise as a Gaussian process and apply the same warping transformation, $n_{\text{GP}}' = \mathcal{T} \circ n_{\text{GP}}$. \citet{chang2024warped} propose an integral formulation that computes warped noise by aggregating deformed pixels from an upsampled noise image. When frame transitions involve only shift or rotation, their noise can similarly be expressed as a linear transformation $n_{\text{INT}}' = \mathcal{T} \circ n_{\text{INT}}$. In this work, we adopt the integral noise formulation of \citet{chang2024warped} for warping noise.

\section{Method}
\label{sec:method}

In this section, we first introduce EquiVDM, a video diffusion model equivariant
to the warping transformations of the input noise. 
Then we show how to better train EquiVDM to account
for the inconsistency in the latent frames obtained from video encoders.
Last, we propose a distribution-matching distillation method using warped noise
to train a one-step EquiVDM for video-to-video generation tasks.

\subsection{Video generation with temporally consistency noise}
\label{ssec:video_generation}
Prior works~\citep{chang2024warped,Daras2024WarpDiffusion} have introduced methods for producing warped noise while preserving per-frame Gaussianity, enabling images to follow the motion patterns of the warped input noise. However, image diffusion models (IDMs) are not inherently equivariant to noise warping, since their generic network layers break this property. As a result, generated sequences often suffer from inconsistencies and abrupt artifacts such as flickering. To mitigate this, \citet{Daras2024WarpDiffusion} proposed a sampling-time guidance method that regularizes generated pixels using optical flow.  

To eliminate the need for such additional regularization or post-training guidance, two strategies are possible: (1) redesign the network architecture to enforce equivariance to input transformations, or (2) learn equivariance directly from training data via tailored losses or training schemes. The first approach requires substantial retraining and and, moreover, constructing equivariant diffusion neural network architectures remains challenging even for simple transformations such as spatial shifts. We therefore adopt the second approach, which avoids architectural modifications and allows efficient finetuning from pretrained models.  

Our key result, stated in the following theorem, is that the standard denoising loss in Eq.~\ref{eq:denoising_loss} implicitly trains VDMs to be \emph{equivariant}, provided the input noise is temporally consistent. In other words, no additional losses, hyperparameters, or regularization are required. Training with warped noise alone is sufficient for VDMs to learn equivariance directly from data.

\begin{theorem}
    Consider a temporally consistent video with $K$ frames $\mathbf{V}=(V^{(0)}, V^{(1)}, \dots, V^{(K)})$, where each frame is obtained by a warping transformation of the first frame $V^{(0)}$, i.e., $V^{(k)} = \mathcal{T}_k \circ V^{(0)}$. Let the noisy video $\mathbf{V}_t$ be generated with consistent warped noise $\mathbf{N}=(n^{(0)}, n^{(1)}, \dots, n^{(K)})$, where $n^{(k)} = \mathcal{T}_k \circ n^{(0)}$. Then the minimizer of the denoising loss in Eq.~\ref{eq:denoising_loss} is a video diffusion model $D_\theta$ that is equivariant to the transformation $\mathcal{T}_k$, i.e., $D_\theta^{(k)}(\mathbf{V}_t)=\mathcal{T}_k \circ D_\theta^{(0)}(\mathbf{V}_t)$, where $D_\theta^{(k)}(\mathbf{V}_t)$ denotes the $k$-th frame of the optimal denoisers output.
    \label{thm:equiv}
\end{theorem}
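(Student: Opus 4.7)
The plan is to exploit the well-known fact that the minimizer of the $L^2$ denoising loss is the posterior mean, and then use linearity of the warping transformation to commute it with the conditional expectation.

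First, I would rewrite the loss in Eq.~\ref{eq:denoising_loss} as a sum of per-frame losses and note that, since each $D_\theta^{(k)}$ acts on the full noisy video $\mathbf{V}_t$, the frame-wise minimizations are independent. A standard calculus-of-variations argument then gives that the optimal denoiser on the $k$-th frame is the conditional expectation
\begin{equation}
D_\theta^{(k)}(\mathbf{V}_t) \;=\; \mathbb{E}\!\left[V^{(k)} \,\big|\, \mathbf{V}_t\right],
\end{equation}
where the expectation is taken under the joint distribution $p(\mathbf{V},\mathbf{V}_t)$ assumed in the theorem, i.e., the distribution supported on temporally consistent pairs with $V^{(k)}=\mathcal{T}_k\circ V^{(0)}$ and $n^{(k)}=\mathcal{T}_k\circ n^{(0)}$.

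Second, I would substitute the structural constraint $V^{(k)}=\mathcal{T}_k\circ V^{(0)}$ into the posterior mean. Because $\mathcal{T}_k$ is a deterministic linear operator (this is exactly the regime noted in the preliminaries for the integral noise of \citet{chang2024warped} under shift/rotation, and for the Gaussian-process noise of \citet{Daras2024WarpDiffusion}), it commutes with the conditional expectation:
\begin{equation}
\mathbb{E}\!\left[V^{(k)} \,\big|\, \mathbf{V}_t\right]
\;=\;
\mathbb{E}\!\left[\mathcal{T}_k\circ V^{(0)} \,\big|\, \mathbf{V}_t\right]
\;=\;
\mathcal{T}_k\circ \mathbb{E}\!\left[V^{(0)} \,\big|\, \mathbf{V}_t\right]
\;=\;
\mathcal{T}_k\circ D_\theta^{(0)}(\mathbf{V}_t),
\end{equation}
which is exactly the claimed equivariance.

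The main subtlety—and the step I would spend the most care on—is the commutativity of $\mathcal{T}_k$ with the conditional expectation. This requires $\mathcal{T}_k$ to be (at least) linear and measurable, which matches the setting the paper has set up but should be stated explicitly as a standing assumption on the warping operator; otherwise the pull-through step fails. A secondary, minor point is that the minimizer is only unique almost surely with respect to $p(\mathbf{V}_t)$, so the equivariance statement should be understood as holding on the support of the noisy-video distribution induced by warped noise. Once these conditions are made precise, the proof reduces to the two lines above.
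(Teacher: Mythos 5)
Your proposal is correct and follows essentially the same route as the paper's proof: identify the per-frame $L^2$ minimizer as the posterior mean $\mathbb{E}[V^{(k)}\mid\mathbf{V}_t]$ (the paper invokes \citet{vincent2011connection} for this step), then pull the linear warping operator $\mathcal{T}_k$ through the conditional expectation to obtain $D_\theta^{(k)}(\mathbf{V}_t)=\mathcal{T}_k\circ D_\theta^{(0)}(\mathbf{V}_t)$. Your added remarks on the linearity/measurability of $\mathcal{T}_k$ and on almost-sure uniqueness of the minimizer are appropriate refinements of the same argument rather than a different approach.
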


\begin{proof}
As shown in \citet{vincent2011connection}, minimizing Eq.~\ref{eq:denoising_loss} with respect to $\theta$ is equivalent to minimizing
\begin{equation}
    \mathcal{L}=\mathbb{E}_{p(t)p(\mathbf{V}_t)}\sum_{k} \left\| D_\theta^{(k)}(\mathbf{V}_t) - \mathbb{E}_{p(\mathbf{V}|\mathbf{V}_t)} \left[V^{(k)} \right] \right\|_2^2.
    \label{eq:recon_loss}
\end{equation}
Using the warping relation $V^{(k)} = \mathcal{T}_k \circ V^{(0)}$, we obtain
\begin{equation}
\mathbb{E}_{p(\mathbf{V}|\mathbf{V}_t)} [V^{(k)}] 
= \mathbb{E}_{p(\mathbf{V}|\mathbf{V}_t)} [\mathcal{T}_k \circ V^{(0)}] 
= \mathcal{T}_k \circ \mathbb{E}_{p(\mathbf{V}|\mathbf{V}_t)} [V^{(0)}],
\end{equation}
where the last equality follows from the linearity of both expectation and the warping operator. Substituting this expression into Eq.~\ref{eq:recon_loss} gives
\begin{equation}
    \mathcal{L}=\mathbb{E}_{p(t)p(\mathbf{V}_t)}\sum_{k} \left\| D_\theta^{(k)}(\mathbf{V}_t) - \mathcal{T}_k \circ \mathbb{E}_{p(\mathbf{V}|\mathbf{V}_t)} \left[V^{(0)} \right] \right\|_2^2.
    \label{eq:equi_denoiser}
\end{equation}
This loss is minimized when $D_\theta^{(0)}(\mathbf{V}_t) = \mathbb{E}_{p(\mathbf{V}|\mathbf{V}_t)} \left[V^{(0)} \right]$ for the first frame, and $D_\theta^{(k)}(\mathbf{V}_t) = \mathcal{T}_k \circ D_\theta^{(0)}(\mathbf{V}_t)$ for all subsequent frames, establishing equivariance.
\end{proof}

The theorem has two key practical implications. First, it shows that when video diffusion models are trained with warped noise, the optimal denoiser becomes equivariant to the warping transformations present in the input noise and any additional video conditioning—without requiring any modification to the training objective. Second, this equivariance implies that if the input noise is warped according to the motion in a video, a VDM trained on such noise will transfer the same motion into its outputs, thereby promoting motion alignment between input and output.  

These insights lead to a simple recipe for training EquiVDM: we retain the standard denoising loss in Eq.~\ref{eq:denoising_loss}, but construct the noise by warping the first-frame noise along motion vectors extracted from a driving video.

\subsection{Independent noise addition}
\label{ssec:add_noise}

Although the theory suggests that training VDMs with warped noise encourages equivariance to spatial warping in the input, our experiments reveal that such models can still struggle to generate high-quality videos in practice. We hypothesize that several factors break the theoretical assumptions: (1) errors in optical flow lead to inaccuracies in the estimated warping transformations; (2) successive frames in natural videos do not exhibit perfect one-to-one mappings, since camera motion can introduce occlusions and newly visible regions; and (3) while optical flow is estimated in pixel space, most VDMs operate in a latent space produced by encoders that are not guaranteed to be equivariant \citep{kouzelis2025eqvae}.

To better understand this issue, Figure~\ref{fig:add_noise} examines the effect of applying warped noise to a latent encoding of a video. We track three pixels across frames and compare their values in the RGB, latent, and corresponding noise spaces. By construction of the warped noise, the RGB and noise values remain consistent across frames, but in the latent space (middle figure) we observe substantial temporal variation. This suggests that latent embeddings of tracked pixels contain additional high-frequency fluctuations that are not captured when simply adding constant warped noise. Since diffusion models rely on a forward process that destroys information across all frequencies to enable reconstruction in the reverse process \cite{kreis2022denoising, rissanen2022generative}, this mismatch undermines the effectiveness of warped noise in latent space.

\begin{wrapfigure}{R}{0.5\textwidth}
    \centering
    \vspace{-5pt}
    \includegraphics[width=\linewidth]{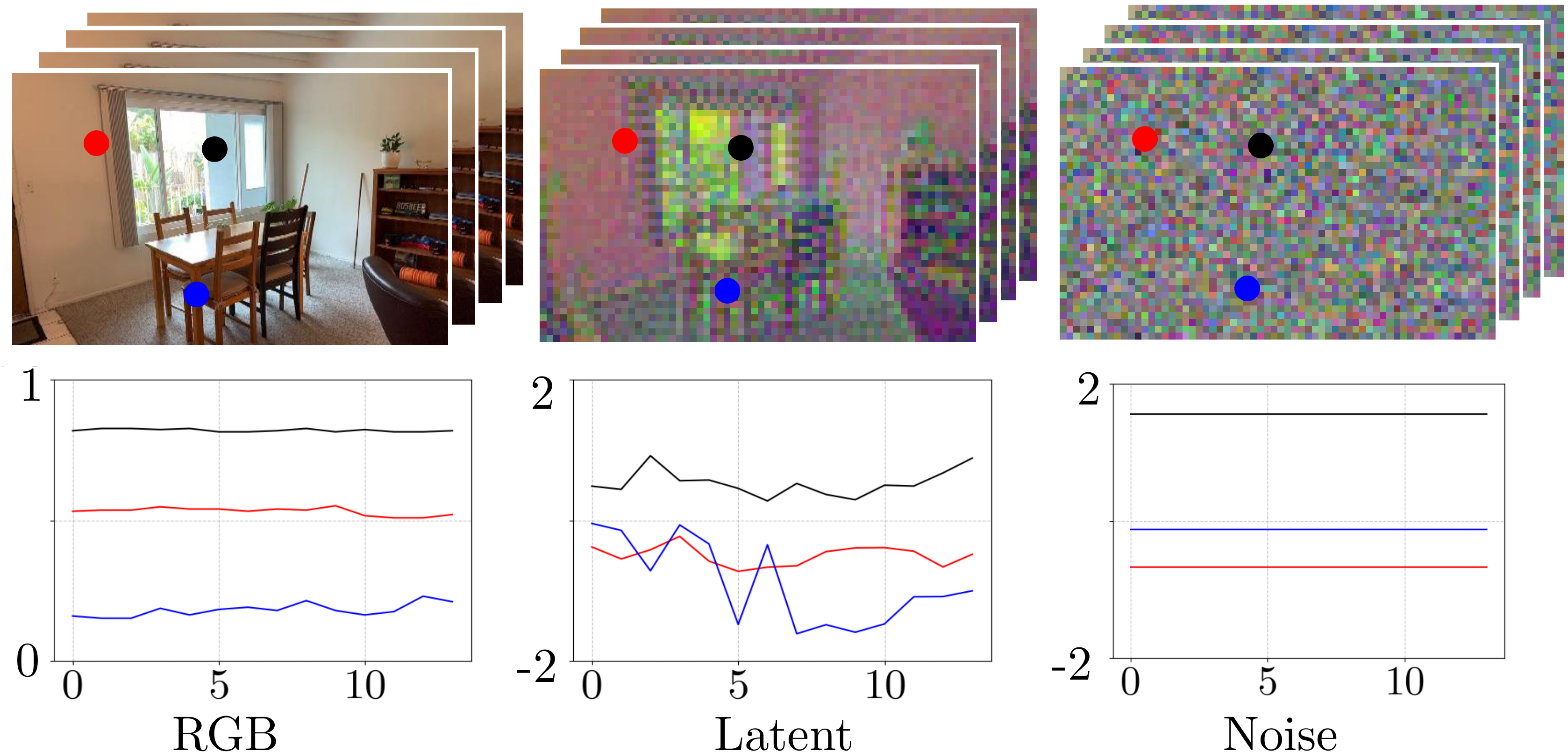}
    \caption{
    The values of three tracked points in the video frames in the pixel, latent and noise videos.
    The variantion in the latent video is much larger than the one in the pixel and noise videos
    due to the compression in the latent space.
    }
    \label{fig:add_noise}
\end{wrapfigure}
To address this issue, we propose adding a small amount of independent noise to each frame in addition to the temporally warped noise during training. Formally, the injected noise is defined as
\begin{equation}
    n = \beta n_{\text{warp}} + \sqrt{1-\beta^2}\, n_{\text{ind}},
    \label{eq:add_noise}
\end{equation}
where $\beta \in [0,1]$ controls the relative strength of the warped noise, and $n_{\text{ind}}$ denotes independent Gaussian noise.  

From another perspective, the added independent component expands the manifold of the noise distribution, enabling it to better cover and disrupt the latent encoding. In contrast, warped noise alone spans a narrower manifold due to strong temporal correlations. Unless otherwise specified, we set $\beta=0.9$ in all experiments, corresponding to injecting only a small fraction of independent noise.

\subsection{One-step distillation made easy}
\label{ssec:one_step_distillation}

Since EquiVDM enforces the input noise and the generated content to follow the
same motion pattern, the input noise and output video are naturally aligned in
terms of motion. In Sec.~\ref{ssec:few_step}, we further show that this
alignment yields smoother sampling trajectories that are easier to simulate
numerically in just a few steps. Building on this observation, we propose a
distribution-matching distillation (DMD) method 
to train a one-step EquiVDM for video-to-video generation tasks. 
Specifically, the one-step generator (student) $G_\theta$ is trained by minimizing the expectation over $t$ of the KL-divergence
between the diffused target distribution $p_{\text{teacher},t}$ from the teacher genarator,
and the diffused generated distribution $p_{\text{student},t}$ from the student generator.
$G_\theta$ is trained using the gradient~\citep{yin2024improved}:
\begin{equation}
    \resizebox{\textwidth}{!}{$\displaystyle
    \nabla_\theta \mathcal{L}_{\text{DMD}}
    = \mathbb{E}_{t} \Big( \nabla_\theta \, \mathrm{KL}\big(p_{\text{teacher},t} \,\|\, p_{\text{student},t}\big) \Big)
    = - \, \mathbb{E}_{t, N} \left[ \frac{1}{t^2}
    \int 
    \left(D_{\text{teacher}}\left(G_\theta(N)+t N_s\right) - D_{\text{student}}\left(G_\theta(N)+t N_s\right) \right)
    \frac{d G_\theta(N)}{d \theta}\,
    \right]
    ,$} \nonumber
\end{equation}
where $D_{\text{teacher}}$ and $D_{\text{student}}$ are the teacher and student score functions, respectively.
During training, the teacher score function is the frozen pretrained EquiVDM
finetuned with warped noise as described in Sec.~\ref{ssec:video_generation}.
For both the one-step student and the student score function, we initialize the
networks with the same pretrained EquiVDM as the teacher, and optimized their
weights during training. The one-step student $G_\theta$ takes warped noise $N$
as input. The generated video is then diffused with noise $N_s$ warped using the
same transformations as $N$. Using identical warping operations for both the
student generator and score function not only preserves the equivariance of the
student model but also simplifies score estimation for distribution matching. 

\section{Experiments}
\label{sec:exp}

In this section, we first validate the effectiveness of the warped noise input
and the learned noise warping equivariance by comparing EquiVDM with other
methods without warped noise input.  Then we show that EquiVDM generates
temporally coherent videos in fewer sampling steps without compromising the
quality, and further demonstrate that it leads to one-step distilled model with
comparable performance to the multi-step baselines.  Last, we perform ablation
studies to investigate the effects of the added noise amount and varying
sampling steps.

\vspace{-0.3cm}
\subsection{Experiment Setup}
\label{ssec:exp_setup}

\paragraph{Datasets and Metrics}

We curate our dataset for training from the training set of RealEstate-10k~
\citep{zhou2018stereo}, OpenVideo-1M ~\citep{nan2024openvid} and VidGen-1M
\citep{tan2024vdgen-1m} datasets.
The RealEstate10K dataset contains about $80$k videos of static real estate scenes,
while OpenVideo-1M and VidGen-1M each contains around $1$M in-the-wild videos including both static and dynamic scenes.
For evaluation, we use Youtube-VIS 2021 \citep{yang2019ytvis} and MSRVTT \citep{msrvtt16} datasets. 
We use LLaVA-NeXT \citep{liu2024llavanext} for video captioning for datasets without captions. 
For efficiency, we extract the video captions for every $10$ frames assuming that the videos are temporally consistent
and the contents do not change too much.

We evaluate video quality using FID~\citep{heusel2017gans} and FVD~\citep{unterthiner2018towards}, and measure alignment with the driving video using the CLIP score~\citep{radford2021learning}. In addition, we adopt the Image Quality (ImQ), Background Consistency (BgC), and Subject Consistency (SubC) metrics from V-Bench++~\citep{huang2024vbench++}, which capture per-frame quality~\citep{musiq2021} as well as temporal consistency across frames in the feature space~\citep{radford2021learning,dino2021}. To further assess temporal consistency and motion alignment with ground-truth videos, we extract dense optical flow from the driving video, warp the generated frames accordingly, and compute the cross-frame PSNR (cf-PSNR) between the warped frames and their corresponding targets in the generated sequence.

\paragraph{Model and Training}
We train EquiVDM by finetuning from the pre-trained VideoCrafter2 (VC2)
\citep{chen2024videocrafter2} and VACE-1.3B \citep{vace2025} models with warped noise
as the input as describe in Section~\ref{sec:method}.  To adapt VC2 for
video-to-video generation, we add and finetune the additional modules from
CtrlAdapter \citep{lin2024ctrl}. 
In particular, we use canny and soft-edge maps extracted from driving
videos using \citep{su2021pixel} as the control frames for VC2,
and depth maps \citep{depth_anything_v2} as the control frames for VACE.
We use the AdamW optimizer
\citep{loshchilov2019decoupled} with a learning rate of $10^{-4}$ for the
finetuning the base model, and $2\times10^{-5}$ for the finetuning the added
control modules.  The model is finetuned on 64 Nvidia A100 GPUs for around 200k
iterations.
 
\subsection{Video Generation}
\label{ssec:results}

\begin{figure*}[t!]
	\centering
    \vspace{-8pt}
	\includegraphics[width=\textwidth]{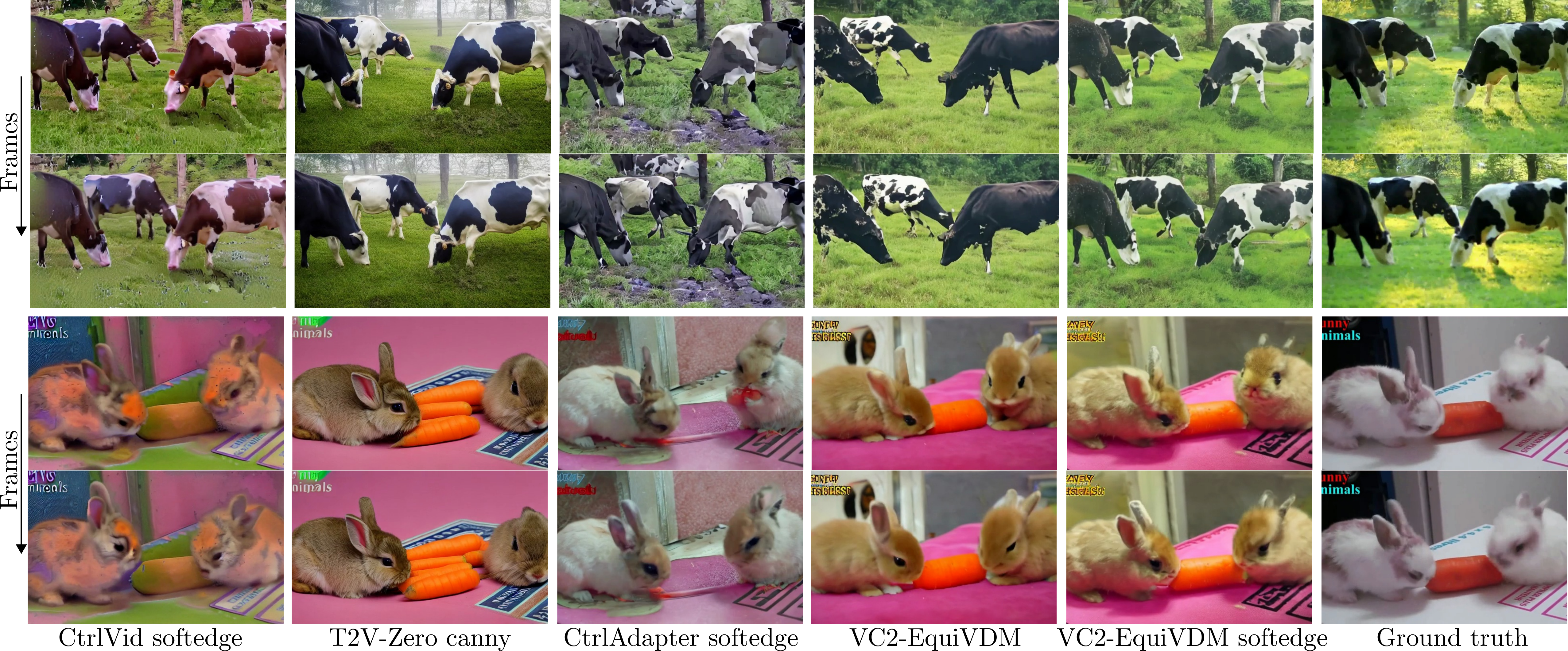}
  \vspace{-8pt}
	\caption{Frames from the generated videos with different video-to-video generation models.
  VC2-EquiVDM uses warped noise without dense video conditioning;
  CtrlVid~\citep{chen2023control}, T2V-Zero~\citep{khachatryan2023text2video},
  CtrlAdapter~\citep{lin2024ctrl} and VC2-EquiVDM-softedge use either canny edge or softedge.
  }
  \vspace{-10pt}
	\label{fig:exp_res}
\end{figure*}

We begin by evaluating whether EquiVDM, trained with warped noise alone in the text-to-video setting—without any additional video conditioning—can outperform models trained with independent noise. Specifically, we test whether warped noise helps the model better capture semantic content and motion alignment. For this experiment, warped noise is generated by extracting optical flow \citep{raft2020} from the training videos associated with each text prompt. We compare our approach against VDMs with both U-Net and DiT backbones~\citep{zhang2023show1,jin2024pyramidal,zheng2024opensora,yang2024cogvideox,chen2024videocrafter2}. 
The focus of this experiment is to explore whether the noise-equivariance can
benefit the video generation without additional modules for input video
conditioning.
Quantitative results in Table~\ref{tab:base_model} show that higher CLIP scores
confirm the noise-equivariant model can infer semantic information directly from
warped noise, while cf-PSNR improvements demonstrate that noise-equivariance
emerges during training, leading to better alignment between motions in the
input noise and the generated videos. The better motion alignment and video quality is a direct
result of the learned model equivariance to the warping transformation.

\begin{table*}[!t]
    \caption{Video generation performance in text-to-video setting without any input video conditioning.}
    \centering
    \resizebox{0.9\textwidth}{!}{%
    \begin{tabular}{@{}lcccccccc@{}}
    \toprule
        \textbf{Method} & \textbf{FID}$\downarrow$ & \textbf{FVD}$\downarrow$ & \textbf{CLIP}$\uparrow$ &  \textbf{cf-PSNR}$\uparrow$ & \textbf{ImQ}$\uparrow$ & \textbf{BgC}$\uparrow$ & \textbf{SubC}$\uparrow$ \\
    \midrule
    VC2~\citep{chen2024videocrafter2} & 41.23 & \secondbest{4565} & 0.6500 & 19.33 & 0.6214 & 0.9250 & 0.9492 \\

    Show-1~\citep{zhang2023show1} & \secondbest{34.83} & 5422 & \secondbest{0.6908} & 20.59 & 0.5633 & 0.9264 & 0.9272 \\

    Pyramid-flow~\citep{jin2024pyramidal} & 46.88 & 5726 & 0.6377 & \secondbest{21.86} & \secondbest{0.6331} & 0.9446 & 0.9537 \\

    OpenSora-1.2~\citep{zheng2024opensora} & 39.14 & 5733 & 0.6898 & 20.35 & 0.6312 & \best{0.9561} & \secondbest{0.9740} \\

    CogVideoX-2B~\citep{yang2024cogvideox} & 36.76 & 5369 & 0.6540 & 18.05 & 0.6002 & \secondbest{0.9490}  & 0.9534 \\

    VC2-EquiVDM & \best{26.59} & \best{3193} & \best{0.6925} & \best{25.65} & \best{0.6485} & 0.9438 & \best{0.9747} \\
    \bottomrule
    \end{tabular}
    }
    \vspace{-5pt}
    \label{tab:base_model}
\end{table*}

\begin{table*}[!ht]
     \vspace{-5pt}
    \caption{Video generation performance in video-to-video setting with input video conditioning. 
    }
    \centering
    \resizebox{0.9\textwidth}{!}{%
    \begin{tabular}{@{}lccccccc@{}}
    \toprule
        \textbf{Method} & \textbf{FID}$\downarrow$ & \textbf{FVD}$\downarrow$ & \textbf{CLIP}$\uparrow$ & \textbf{cf-PSNR}$\uparrow$ & \textbf{ImQ}$\uparrow$ & \textbf{BgC}$\uparrow$ & \textbf{SubC}$\uparrow$ \\
    \midrule
    IntegralNoise canny~\citep{chang2024warped} & 39.68 & 3238 & 0.7262 & 14.09 & 0.5273 &  0.9221 & 0.9345 \\
    
    CtrlVid canny~\citep{chen2023control} & 38.45 & 2724 & 0.7154 & 22.68 & 0.6459 & 0.8991 & 0.8622 \\
    
    CtrlVid softedge~\citep{chen2023control} & 59.80 & 2694 & 0.7129 & 23.16 & 0.5480 & 0.9051 & 0.8668 \\
    
    T2V-Zero canny~\citep{khachatryan2023text2video} & 29.98 & 3350 & 0.7146 & 21.57 & 0.5652 & 0.8736 & 0.8761 \\
    
    CtrlAdapter softedge~\citep{lin2024ctrl} & 39.62 & 2789 & 0.7167 & 21.52 & \best{0.6570}  & 0.8829 & 0.8683 \\
    
    CtrlAdapter canny~\citep{lin2024ctrl} & 36.24 & 2496 & 0.7214 & 23.09 & 0.6173 & 0.8773 & 0.8628 \\
    VACE depth~\citep{vace2025} & 27.04 & 2989 & 0.7421 & \secondbest{28.14} & 0.6056 & \best{0.9448} & \secondbest{0.9494} \\
    VC2-EquiVDM softedge & 24.40 & \secondbest{2122} & 0.7293 & 26.86 & 0.5817 & 0.8880 & 0.8764 \\
    VC2-EquiVDM canny & \best{22.24} & \best{1922} & \secondbest{0.7551} & 26.58 & 0.6244 & 0.8787 & 0.8688 \\
    VACE-EquiVDM depth & \secondbest{23.61} & 2329 & \best{0.7890} & \best{29.55}   & \secondbest{0.6503} & \secondbest{0.9394} & \secondbest{0.9451} \\
    \bottomrule
    \end{tabular}
    }
    \label{tab:controlled}
\end{table*}

We then evaluate our method on video-to-video generation task.
We compare our method against models with additional control modules
\citep{chen2023control,khachatryan2023text2video,lin2024ctrl,vace2025}.
The qualitative results are shown in Figure~\ref{fig:exp_res}.
VC2-EquiVDM generates videos from warped noise without using
dense conditioning, while other methods use either canny edges or HED softedge \cite{softedge2015}.
VC2-EquiVDM-softedge achieves the best temporal consistency for textures (\textit{e.g.} the
the patterns of the cows, the grass texture),
as well as the motion alignment (\textit{e.g.} the motion of legs of the cows, the orientation of rabbit's head) with the ground truth video where the warping optical flow is extracted from.

The quantitative results are listed in Table~\ref{tab:controlled}.
Our method achieves the best performance on frame quality, semantic and motion metrics.
This manifests that EquiVDM can benefit video generation by taking advantage of the temporal correlation from the warped noise input.
It also indicates that the temporal correlation in the warped noise can serve as a strong prior for both the motion pattern and semantic information in addition to motions. 
We finetuned an image diffusion model~\citep{chang2024warped} (IDM) with warped noise to evaluate whether IDM can be trained to be equivariant.
The drastic decrease in the cf-PSNR score indicates that the finetuned IDM is not equivariant to the input noise warping without introducing additional regularization or sampling-time guidance.

Another observation is that for our method, the performance of the video-to-video 
model is generally better than the base model, indicating that the benefit of
equivariance is complementary to the additional conditioning modules. As a result, for
video-to-video generation tasks, we can improve the performance by making the
full model noise-equivariant without any architecture modification to it.

\subsection{Few-step generation for EquiVDM}
\label{ssec:few_step}

To study how the wapred noise input affects the generation, we plot the generation trjaectory curvature for VACE \citep{vace2025} with independent noise and VACE-EquiVDM with warped noise in \cref{fig:curve}.
As shown, the curvature for VACE-EquiVDM is significantly lower compared to VACE with inpendent noise, indicating that the warped noise input helps to make the generation trajectory more straight,
hence we can use fewer sampling steps to generate videos with similar or better quality.

To verify this, we compare the generation performance of VACE and VACE-EquiVDM
with different numbers of sampling steps in \cref{tab:few_step}. The
degradation in quality and semantic alignment for VACE-EquiVDM is much slower
compared to VACE with fewer sampling steps.
In addition, we use the method described in \cref{ssec:one_step_distillation} to distill the VACE-EquiVDM model into a one-step model, and compare the performance with the one-step VACE-EquiVDM.
The distilled one-step VACE-EquiVDM model achieves better performance than the one-step VACE, and matches the performance of the 10-step VACE with independent noise.

\begin{wrapfigure}[16]{r}{0.38\textwidth}
  \vspace{6pt}
  \centering
  \includegraphics[width=\linewidth]{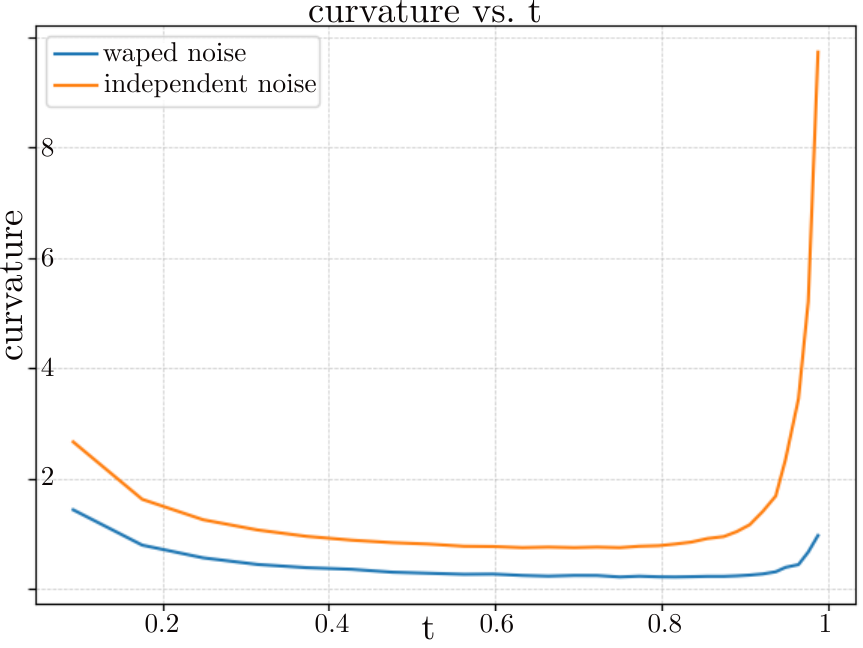}
  \caption{Straightness of generation trajectories for VACE \citep{vace2025} with independent noise and VACE-EquiVDM with warped noise.}
  \label{fig:curve}
\end{wrapfigure}

\par\noindent

\begin{table*}[!t]
  \caption{Comparison for VACE and VACE-EquiVDM with different sampling steps. 
  }
  \centering
  \resizebox{0.9\textwidth}{!}{%
  \begin{tabular}{@{}lccccccc@{}}
  \toprule
      \textbf{Steps} & \textbf{FID}$\downarrow$ & \textbf{FVD}$\downarrow$ & \textbf{CLIP}$\uparrow$ & \textbf{cf-PSNR}$\uparrow$ & \textbf{ImQ}$\uparrow$ & \textbf{BgC}$\uparrow$ & \textbf{SubC}$\uparrow$ \\
  \midrule
  10-step VACE~\citep{vace2025} & 27.04 & 2989 & 0.7421 & 28.14 & 0.6056 & 0.9448 & 0.9494 \\
  10-step VACE-EquiVDM & 23.61 & 2329 & 0.7890 & 29.55   & 0.6503 & 0.9394 & 0.9451 \\  
  \midrule
  5-step VACE & 31.63 & 3000 & 0.7214 & 28.19 & 0.5576 & 0.9421 & 0.9446 \\
  5-step VACE-EquiVDM & 23.95 & 2271 & 0.7894 & 30.15 & 0.6514 & 0.9361 & 0.9456 \\
  \midrule
  3-step VACE & 38.06 & 3037 & 0.6951 & 28.64 & 0.5005 & 0.9364 & 0.9354 \\
  3-step VACE-EquiVDM & 26.16 & 2274 & 0.7815 & 30.63 & 0.6542 & 0.9317 & 0.9422 \\
  \midrule
  1-step distilled VACE & 29.42 & 3004 & 0.7413 & 28.94 & 0.6367 & 0.9204 & 0.9554 \\
  1-step distilled VACE-EquiVDM & 25.94 & 2553 & 0.7828 & 29.38 & 0.6520 & 0.9283 & 0.9427 \\
  \bottomrule
  \end{tabular}
  }
  \vspace{-10pt}
  \label{tab:few_step}
\end{table*}

\subsection{Ablation Studies}
\label{ssec:ablation}

\textbf{Sampling steps}
Since the motion information about the video is already included in the warped
noise, one natural question is whether the number of sampling steps can be
reduced compared to the one using independent noise where both the motion and
appearance have to be generated from scratch.  To answer this question, we first
inspect how the warped noise changes the distance between the input noise and
corresponding latents of real videos during training. As shown in
\cref{fig:ablation_steps_plot}~(a), with the warped noise ($\beta>0$), the
noise-video distance is lower compared to the independent noise ($\beta$=0), and
the distance decreases as $\beta$ becomes larger. This indicates that the warped
noise is more \textit{aligned} with the target video, and similar to the
observation in optimal transport flows~\citep{pooladian2023multisample,
tong2023improving}, this data-noise alignment can make sampling trajectories
easy to integrate with fewer steps.

We further evaluate our method on ScanNet++ with different numbers of sampling
steps using VC2-EquiVDM \citep{chen2024videocrafter2} with soft-edge maps as the
control frames \citep{lin2024ctrl}.  As shown in
Figure~\ref{fig:ablation_steps_plot}~(b), with warped noise input, our method
can generate videos with similar or better quality compared to the one using
independent noise in much fewer sampling steps.  In addition, the metrics
saturate quickly, indicating that the appearance of the video can be generated
from scratch with few sampling steps given the warped noise
input.  As shown in Figure~\ref{fig:ablation_steps_image}, the detailed
appearance-like reflection on the table surface can be generated in as few as 5
sampling steps. These results open up new venues for video diffusion
acceleration with warped noise.

\begin{wraptable}[8]{r}{0.45\textwidth}
  \centering
  \caption{Ablations on added noise weight $\beta$.}
  \label{tab:ablation_beta}
  \footnotesize
  \setlength{\tabcolsep}{3pt}
  \begin{tabular*}{\linewidth}{@{\extracolsep{\fill}}cccccc}
      \toprule
      $\beta$ value & \textbf{FID} & \textbf{FVD} & \textbf{CLIP} & \textbf{PSNR} & \textbf{SSIM} \\
      \midrule
      0.0 & 39.92              & 2292              & 0.8126              & 20.81              & 0.6057              \\
      0.5 & \secondbest{26.66} & \secondbest{1765} & 0.8509             & \secondbest{30.77} & \secondbest{0.9258} \\
      0.9 & \best{25.12}       & \best{1585}       & \secondbest{0.8575} & \best{31.91}       & \best{0.9343}       \\
      1.0 & 50.03              & 1910              & \best{0.9224}       & 28.67              & 0.9224              \\
      \bottomrule
  \end{tabular*}
\end{wraptable}
\par\noindent

\textbf{Added noise amount}
We evaluate our method with different amounts of added independent noise by adjusting the $\beta$ value in Equation~\ref{eq:add_noise}.
A smaller $\beta$ value indicates more noise added to the video hence less warped noise, and vice versa.
In particular, for $\beta=0.0$ the input noise is independent for each frame without any temporal consistency;
while $\beta=1.0$ indicates the input noise is fully determined by the first frame and the warping operation without any variations.

We evaluate the performance on the test set of RealEstate10K dataset.
As shown in Table~\ref{tab:ablation_beta},
using warped noise helps in generating better videos in terms of
quality, semantic alignment, and temporal consistency.  On the other hand,
without any added independent noise, the performance degrades since the model
fails to model the high-frequency temporal variations of the corresponding pixels in the latent space;
while the added independent noise expands the manifold of the input noise such
that it covers the latent space better, as discussed in Section~\ref{ssec:add_noise}.
We found that adding a small amount of independent noise with $\beta=0.9$ achieves the
best balance between quality and consistency.

\begin{figure}[t!]
	\centering
	\includegraphics[width=\textwidth]{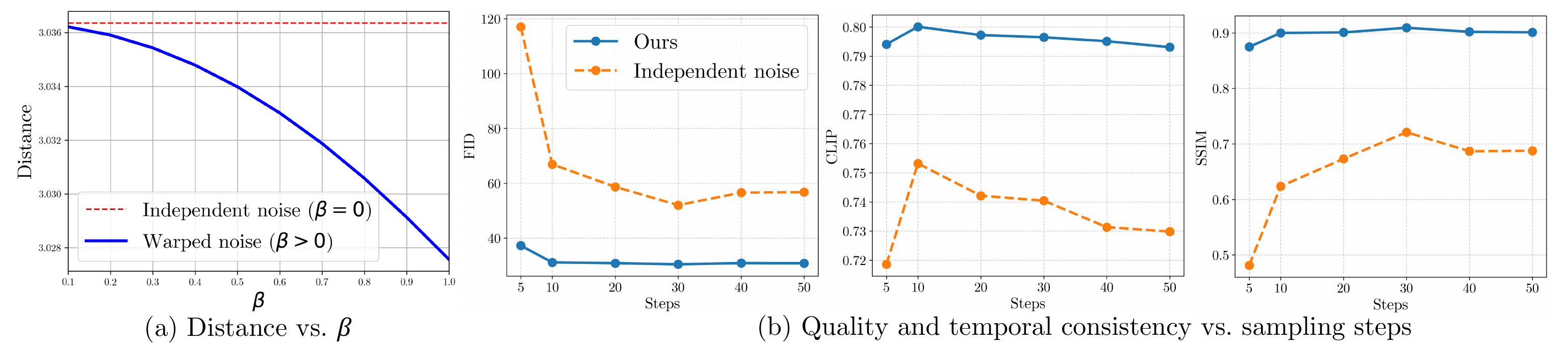}
  \vspace{-15pt}
	\caption{
    (a) The noise-to-video distance reduces with the warped noise. 
    (b) Less sampling steps are needed for EquiVDM
    to achieve similar or better quality compared to independent noise.
  }
	\label{fig:ablation_steps_plot}
\end{figure}

\begin{figure}[t!]
	\centering
	\includegraphics[width=\textwidth]{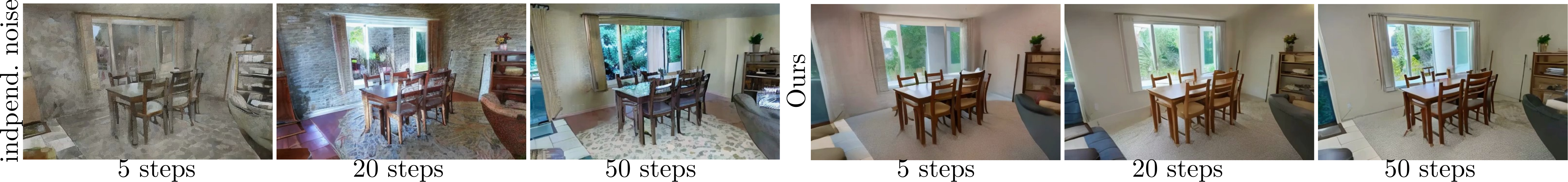}
	\caption{Our method with warped noise generates videos with similar 
		quality compared to the one using independent noise, but with much fewer sampling steps.}
	\label{fig:ablation_steps_image}
  \vspace{-10pt}
\end{figure}

\FloatBarrier

\section{Limitation and Conclusion}
In this work, we introduced EquiVDM, a video diffusion model inherently equivariant to spatial warping of the input noise. We showed that EquiVDM can be trained with warped noise using the standard denoising loss without additional regularization, and that it produces videos with superior motion fidelity and visual quality compared to state-of-the-art methods. We further demonstrated that EquiVDM achieves high-quality video generation in only a few sampling steps, enabling faster inference without sacrificing fidelity. In addition, we proposed a distribution-matching distillation method that leverages warped noise to train a one-step student EquiVDM for video-to-video generation.  

Our approach has two main limitations. First, it requires optical flow from the driving video to warp the noise, which is not always available, e.g., in text-to-video generation. A potential remedy is to generate optical flow directly from the text prompt and use it to warp the noise. Second, for long video generation, warped noise input alone does not fully prevent drifting. Future directions include incorporating auto-regressive video diffusion models trained with Diffusion/Self-Forcing~\citep{chen2024diffusion,huang2025selfforcing} to address this issue.

\FloatBarrier

{\small
\bibliographystyle{unsrtnat}
\bibliography{main}
}

\clearpage
\appendix
\section{cf-PSNR used for evaluating temporal consistency}
\label{sec:appendix_temporal_consistency}
The cf-PSNR metrics in our paper are used for evaluating the temporal
consistency of the generated frames, as well as how the motion pattern of the
generated frames follows the optical flow of the input noise. As shown in 
Figure~\ref{fig:psnr_ssim}, to compute the cf-PSNR metrics, we first extract the 2D
optical flow of the input driving video. Then given the correpsponding generated
video, we warp the the source frame (the frame t in the case shown in the
illustration) towards the target frame (the frame t+1) using the optical flow.
Then we compute the cf-PSNR metrics between the warped source frame and
the target frame. As a result, if the generated video follows the same motion
pattern as the ground truth and maintains temporal consistency, it will yield a
higher cf-PSNR score—and vice versa.

Compared with the metrics in Video Benchmark~\cite{liu2024evalcrafter}, our
metric is similar to the “Warping Error” for temporal consistency in the Sec.4.4
of that paper. The only difference is that the optical flow used for warping is
estimated from the ground truth video rather than generated video.

\begin{figure*}[!h]
    \centering
    \includegraphics[width=\textwidth]{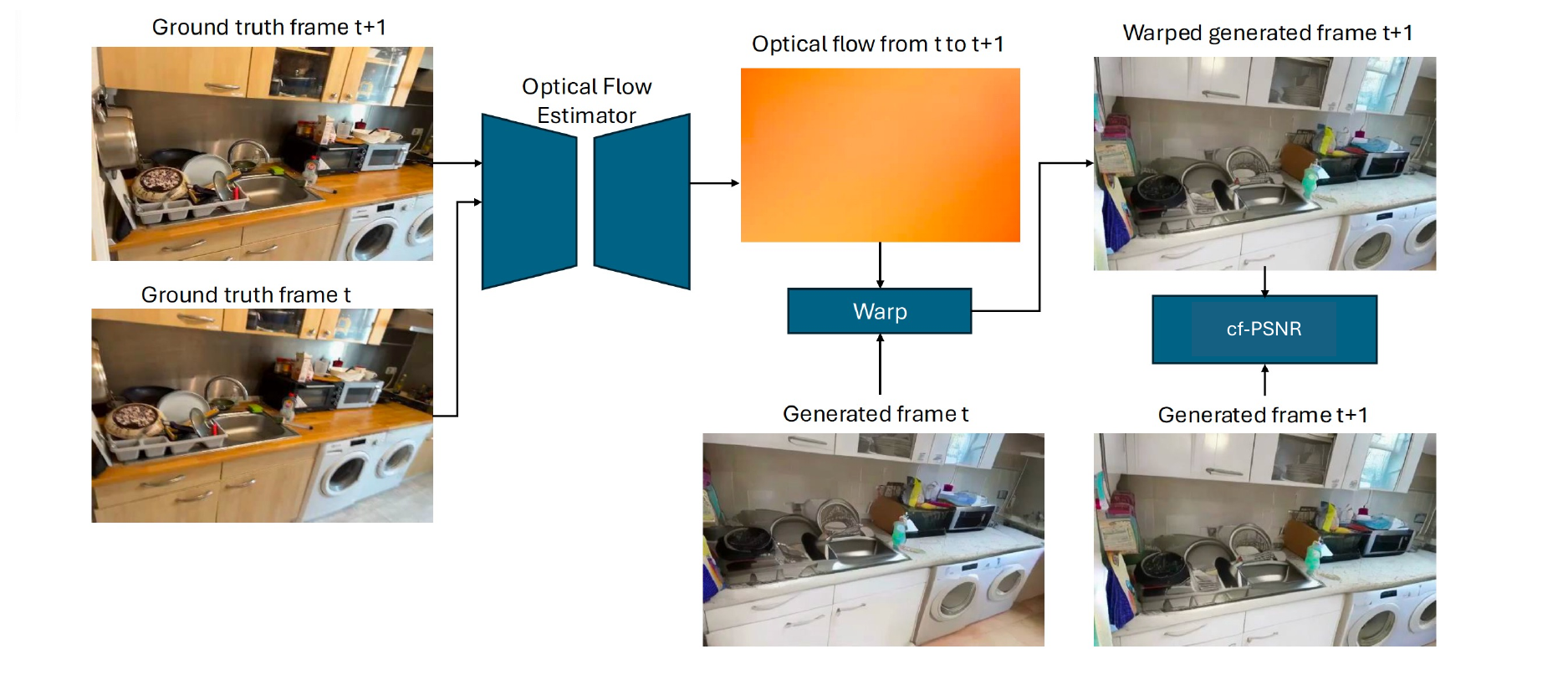}
    \caption{Illustration of the cf-PSNR metrics used for evaluating temporal consistency.}
    \label{fig:psnr_ssim}
\end{figure*}

\section{More results on Few-step video generation with EquiVDM}

\begin{wrapfigure}{r}{0.33\textwidth}
    \vspace{-10pt}
    {%
    \setlength{\intextsep}{2pt}%
    \setlength{\columnsep}{4pt}%
    \setlength{\abovecaptionskip}{1pt}%
    \setlength{\belowcaptionskip}{0pt}%
    \centering
    \includegraphics[width=\linewidth]{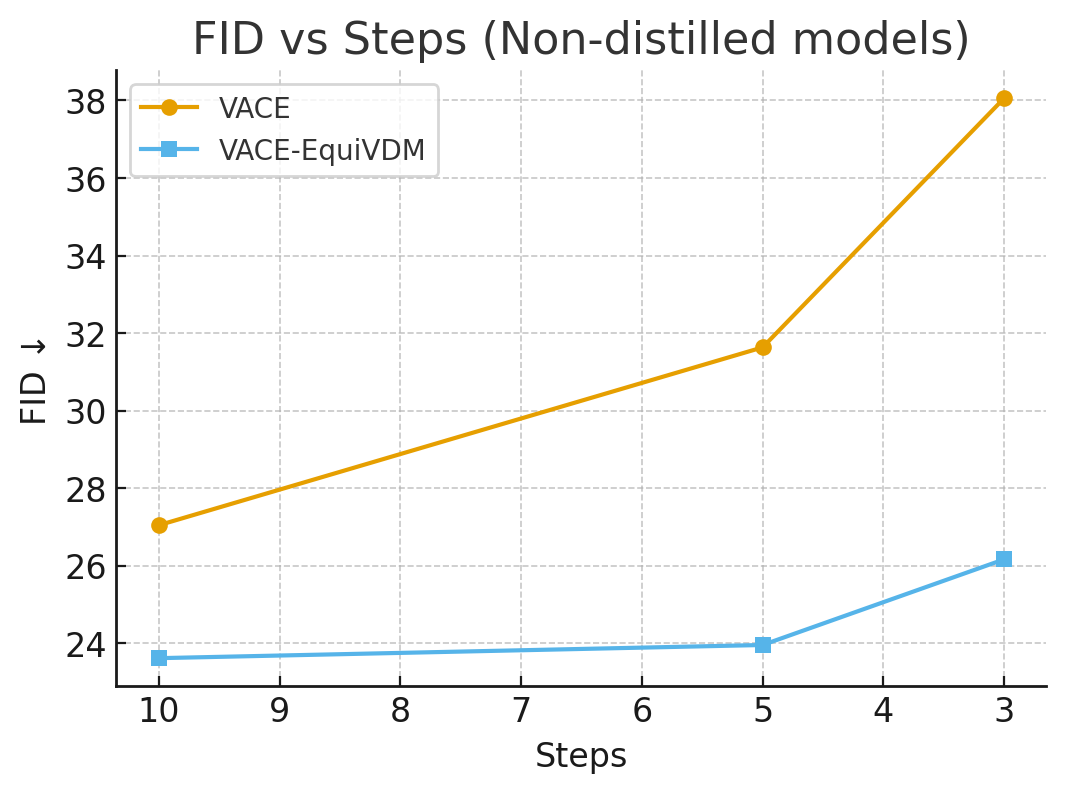}
    \caption{FID scores for the few-step video generation with both models.}
    \label{fig:fid}
    \vspace{-4pt}
    }%
\end{wrapfigure}

In this section, we provide more results on the few-step video generation using VACE-1.3B \cite{vace2025} VACE-EquiVDM.
Both models use the depth input video conditioning.
We first plot the FID scores for the few-step video generation with both models in Figure~\ref{fig:fid}.
The EquiVDM model generates better quality videos with fewer steps.
Even with 3 steps, the EquiVDM model can generate videos with on-part quality to the VACE-1.3B model
at 10 steps.

In addition, the FID score degrages more gracefully for the EquiVDM model.
More qualitative results are shown in Figure~\ref{fig:few-step-results} and the accompanying html file.

\begin{figure*}[!h]
    \centering
    \includegraphics[width=\textwidth]{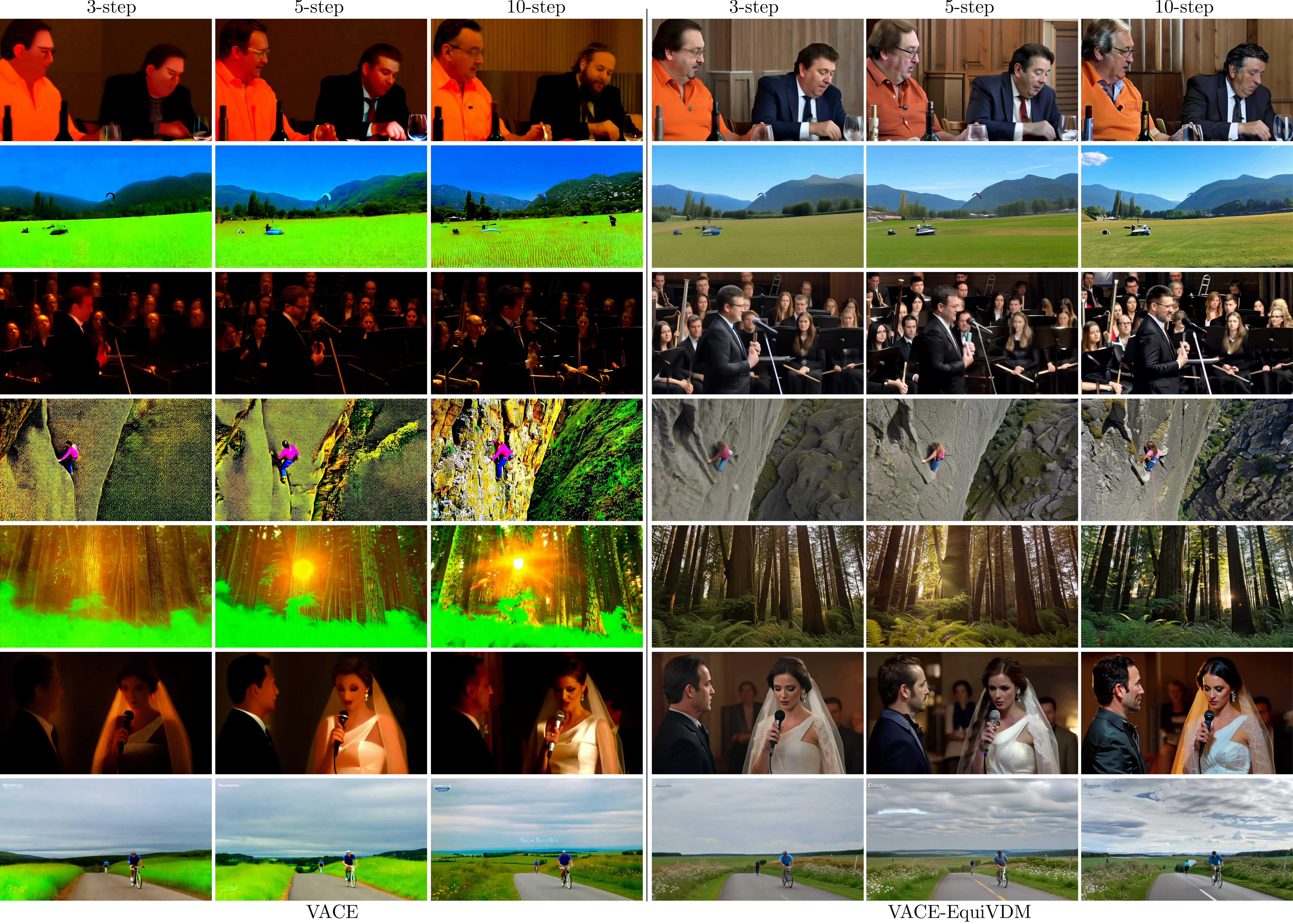}
    \caption{Qualitative results for the few-step video generation for VACE-1.3B and VACE-EquiVDM.}
    \label{fig:few-step-results}
\end{figure*}

\section{EquiVDM for Diffusion Models with Transformers}

For video diffusion models with transformer backbone \cite{peebles2023scalable,
yang2024cogvideox,nvidia2025cosmos}, the latent space of the video where the
diffusion and sampling process are performed is a set of video tokens from a
video tokenizer.  Unlike the VAEs in the UNet-based video diffusion models, the
video tokenizer not only compress the spatial dimension of the video, but also
the temporal dimension. For example, in CogVideoX \cite{yang2024cogvideox} and
CosMos \cite{nvidia2025cosmos}, the tokenizer processes a video with \(N\)
frames by first encoding the initial frame independently. It then encodes the
subsequent \(N-1\) frames into a sequence of \(\lceil(N-1)/k\rceil\) temporal
tokens, where \(k\) represents the temporal compression factor.

We build the warped noise frames accordingly
to account for the temporal compression in the video tokenizer.
For example, for the video tokenizer temporal compression scheme
in CogVideoX \cite{yang2024cogvideox} and CosMos \cite{nvidia2025cosmos},
we first get the subsampled video by taking the first frame and every \(k\)-th frame
from the following frames. Then we build the warped noise frames from
the subsampled video.
Another option is to build the warped noise frames directly from the original video,
then subsample the warped noise frames accordingly.
The first apporach is more efficient since it reduces the numbers of optical flow estimations.
On the other hand, the second approach is more robust to videos with large motions.
In our experiment, we use the second apprach for more robustness.

To add the control signal such as soft-edge maps, we use the same method as
in the UNet-based video diffusion models: we add the adapter layers
\cite{lin2024ctrl} between the frame encoder for the controlling
frames and the transformer blocks in the video diffusion model.  We interlace
the adapter layers every 4 transformer blocks in the transformer backbone to
avoid memory overflow.
The qualitative results of the EquiVDM with the CogVideoX
\cite{yang2024cogvideox} model are shown in Figure~\ref{fig:dit-comparison1}--\ref{fig:dit-comparison4}.

\begin{figure*}[!tb]
    \centering
    \includegraphics[width=\textwidth]{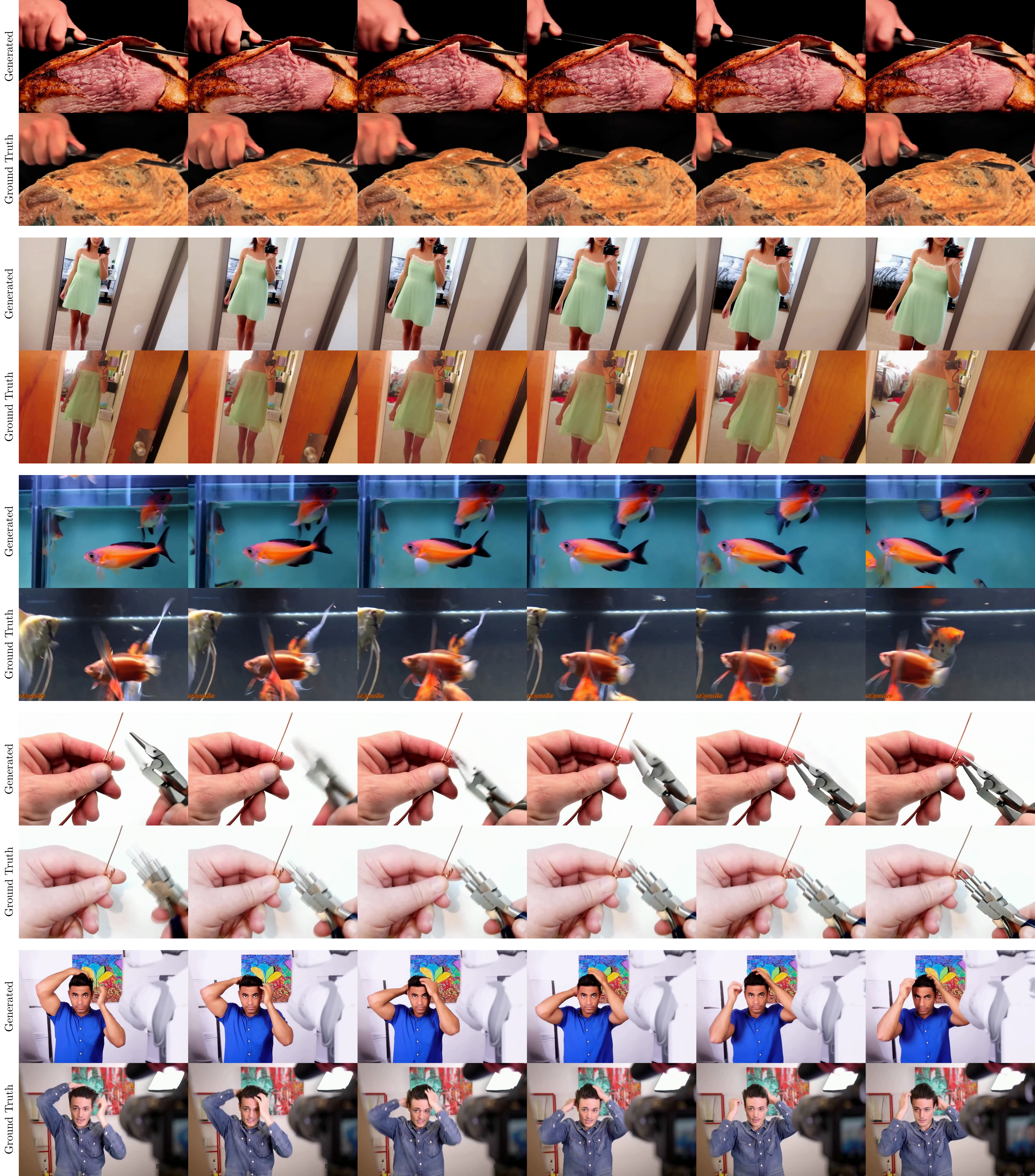}
    \caption{
        The generated and driving videos of DiT-based video diffusion models.
    }
    \label{fig:dit-comparison1}
\end{figure*}

\begin{figure*}[!tb]
    \centering
    \includegraphics[width=\textwidth]{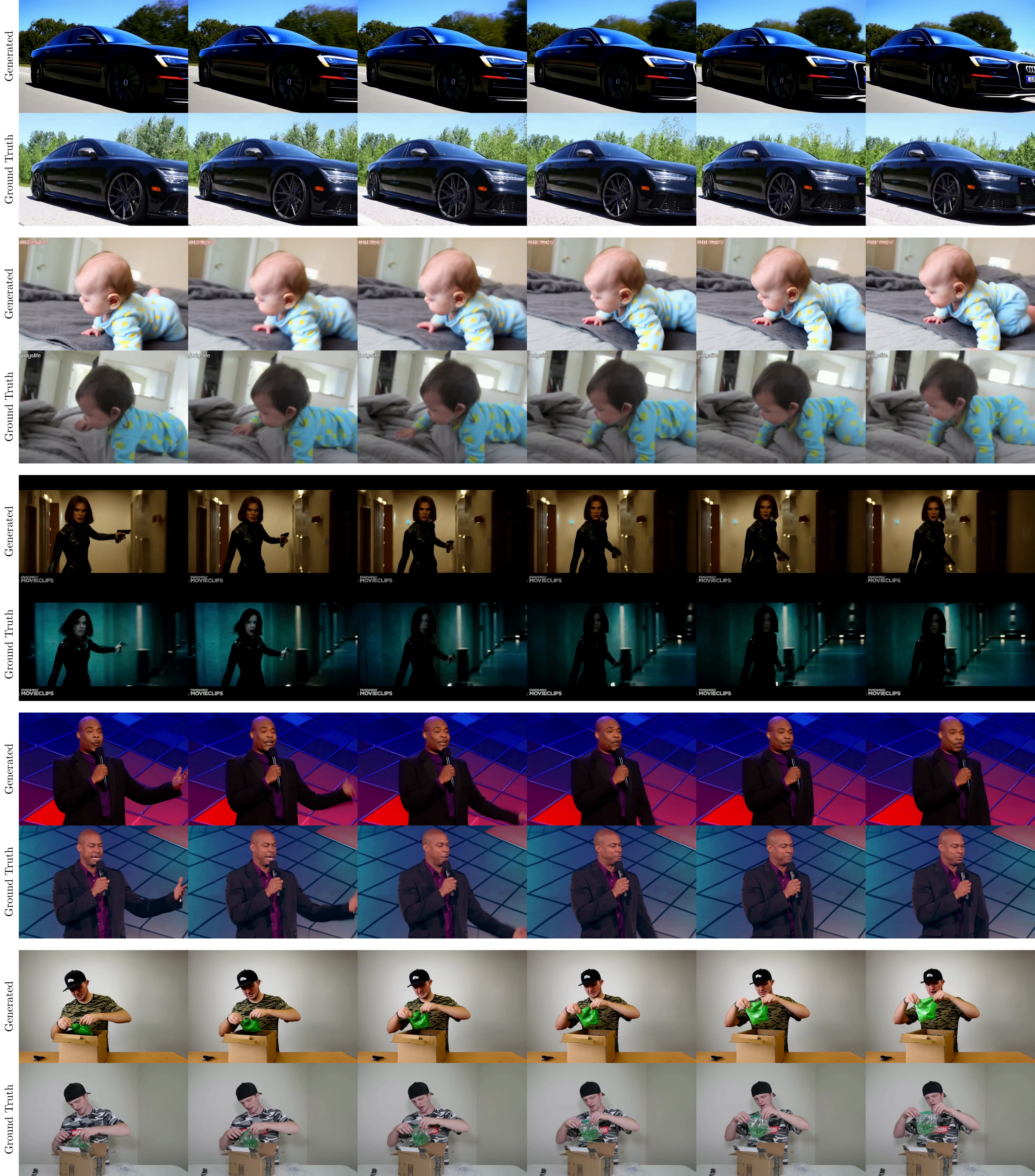}
    \caption{
        The generated and driving videos of DiT-based video diffusion models.
    }
    \label{fig:dit-comparison2}
\end{figure*}

\begin{figure*}[!tb]
    \centering
    \includegraphics[width=\textwidth]{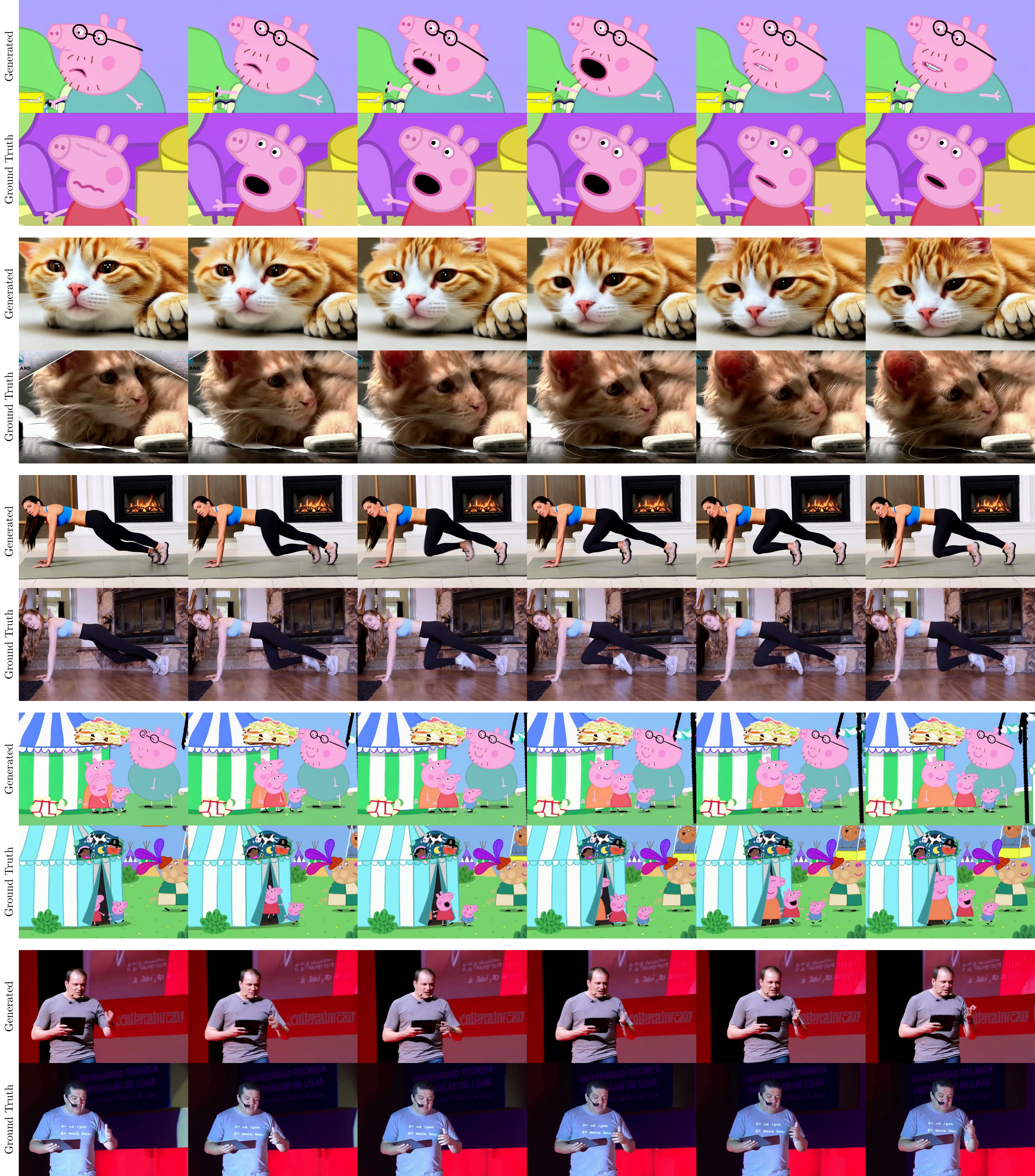}
    \caption{
        The generated and driving videos of DiT-based video diffusion models.
    }
    \label{fig:dit-comparison3}
\end{figure*}  

\begin{figure*}[!tb]
    \centering
    \includegraphics[width=\textwidth]{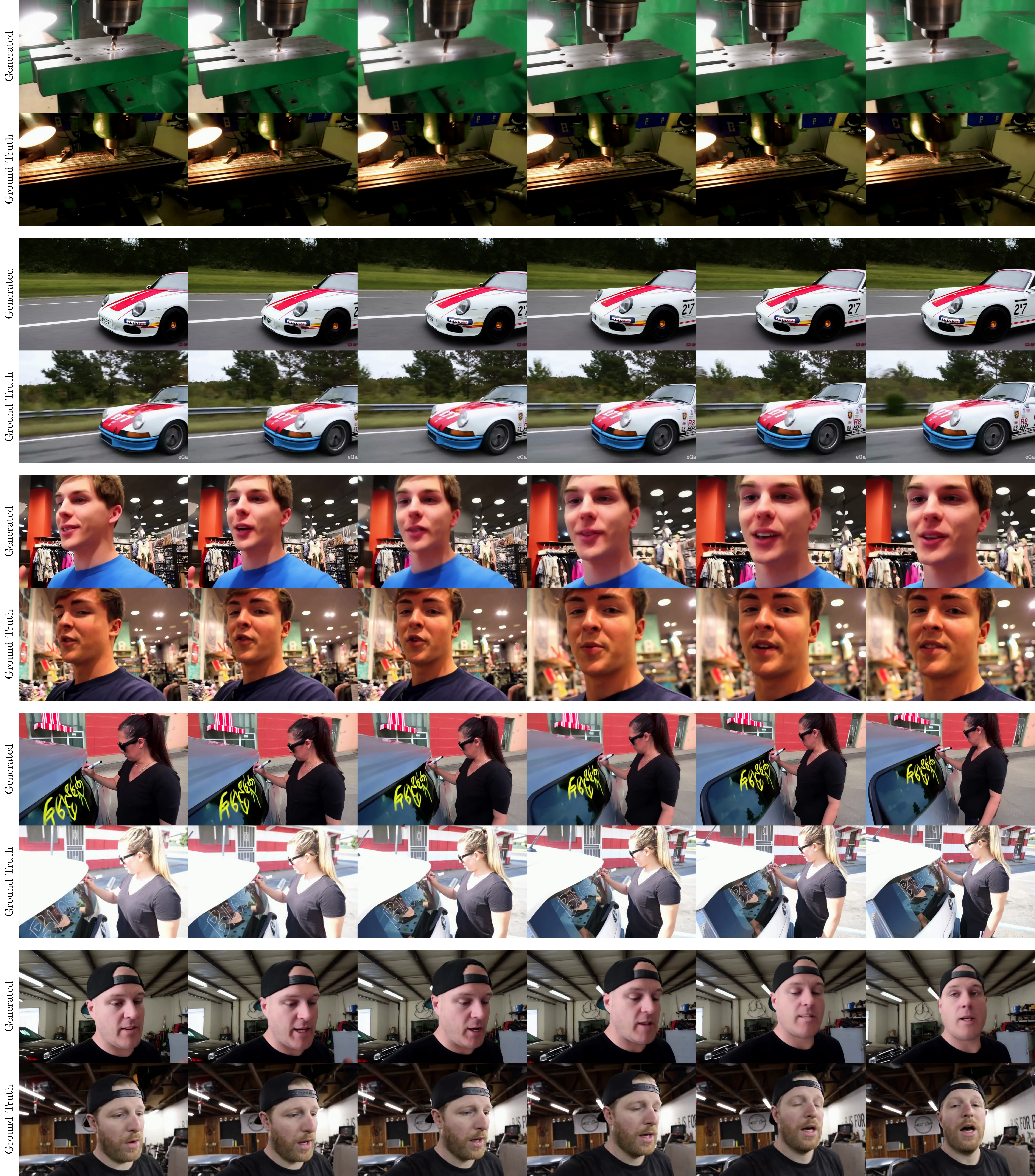}
    \caption{
        The generated and driving videos of DiT-based video diffusion models.
    }
    \label{fig:dit-comparison4}
\end{figure*}

\section{Additional Results for Comparsions with other Methods}
In Figure~\ref{fig:appendix-a}-\ref{fig:appendix-f}, we provide additional
qualitative results for the comparison in Table~2 in Section~5.2. Please refer
to the accompanying html file for the video results.

\begin{figure*}[!htb]
    \centering
    \includegraphics[width=\textwidth]{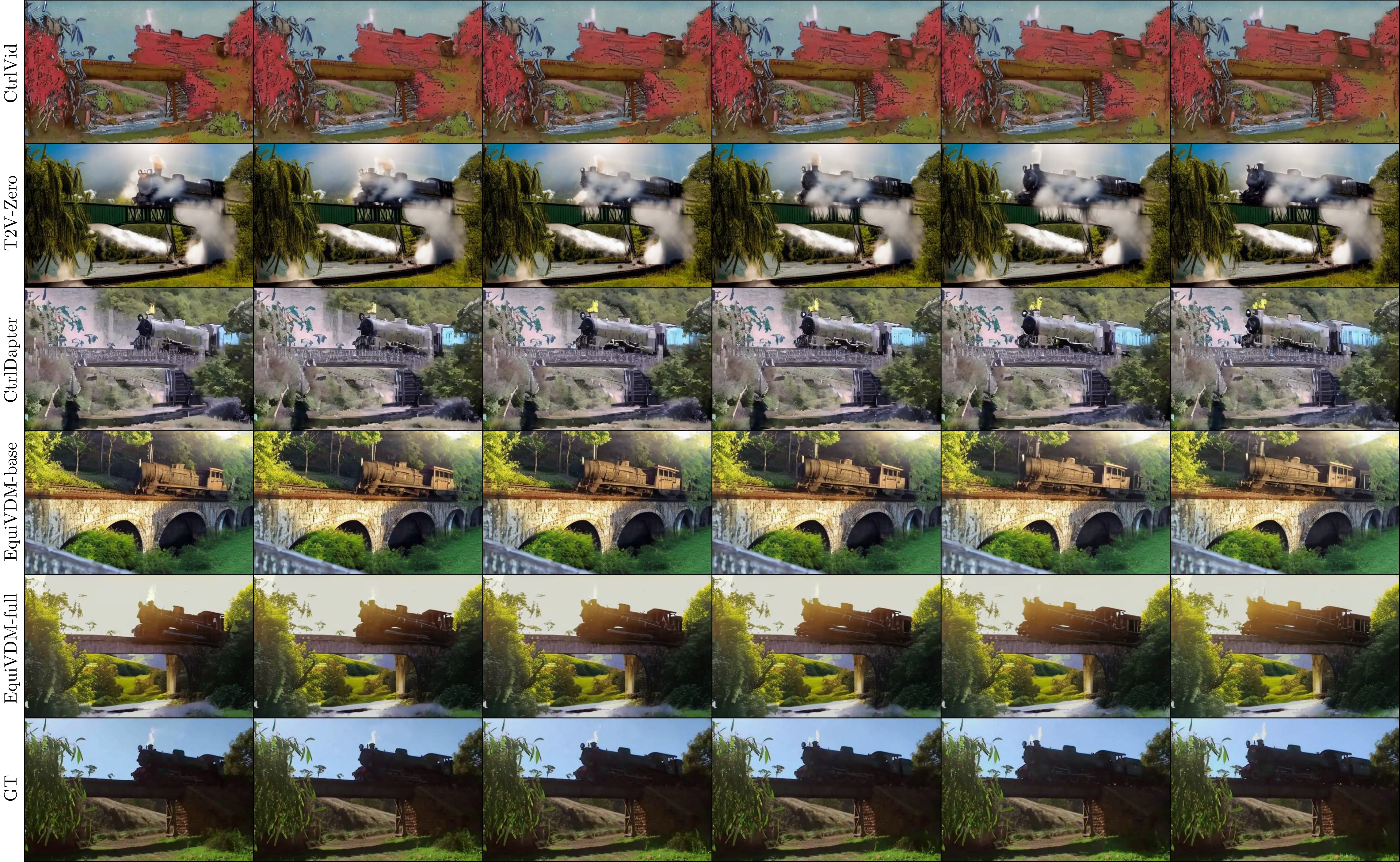}
    \caption{Comparison of EquiVDM with other methods.
    CtrlVid~\cite{chen2023control}, T2V-Zero~\cite{khachatryan2023text2video},
    CtrlAdapter~\cite{lin2024ctrl} and EquiVDM-full used soft-edge map as control signal for each frame
    along with the text prompt. 
    EquiVDM-base generates videos from warped noise without using dense conditioning.
    }
    \label{fig:appendix-a}
\end{figure*}
 
\begin{figure*}[!t]
    \centering
    \includegraphics[width=\textwidth]{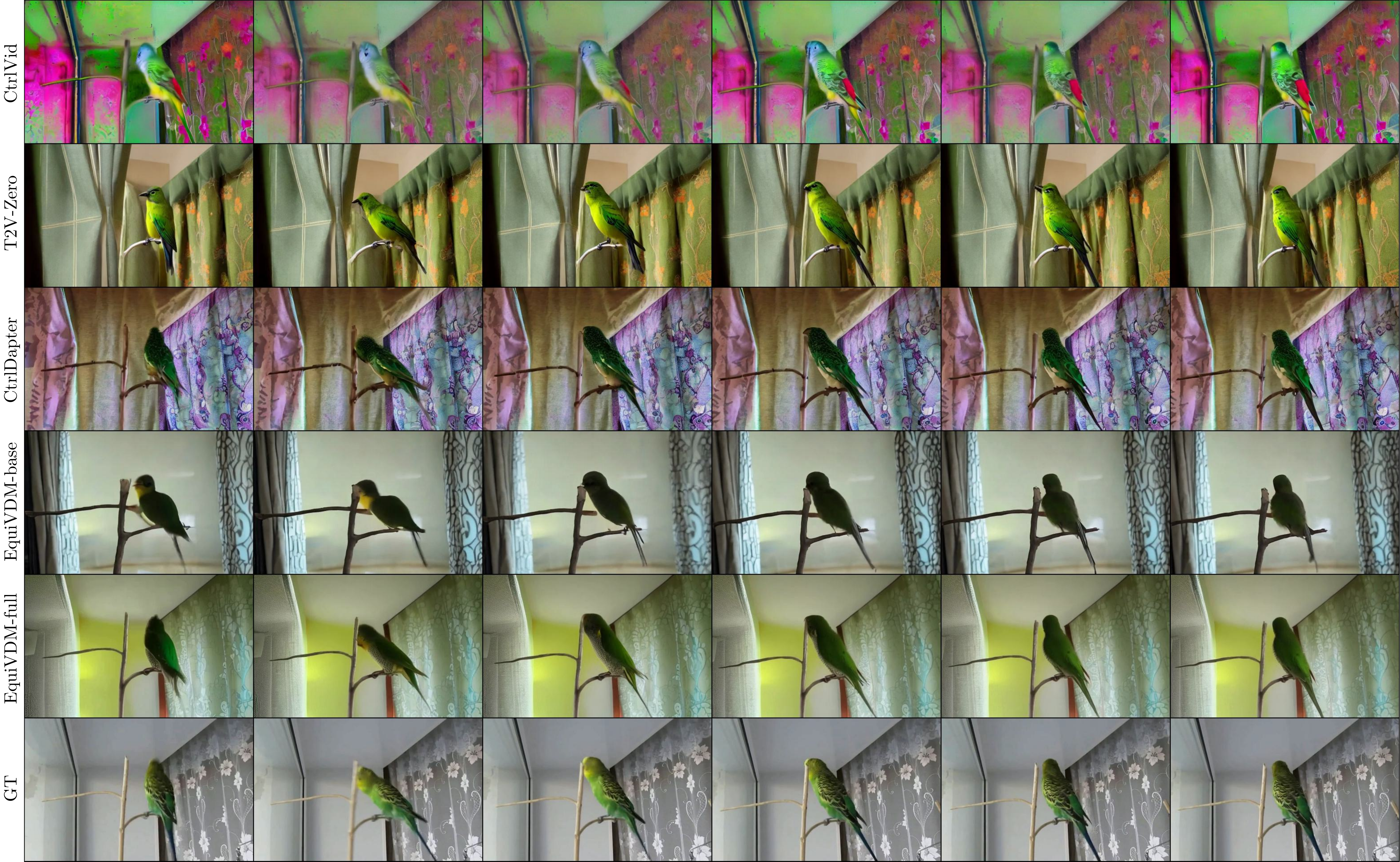}
    \caption{Comparison of EquiVDM with other methods.
    CtrlVid~\cite{chen2023control}, T2V-Zero~\cite{khachatryan2023text2video},
    CtrlAdapter~\cite{lin2024ctrl} and EquiVDM-full used soft-edge map as control signal for each frame
    along with the text prompt. EquiVDM-base generates videos from warped noise without using dense conditioning.
    }
    \label{fig:appendix-b}
\end{figure*}

\begin{figure*}[!t]
    \centering
    \includegraphics[width=\textwidth]{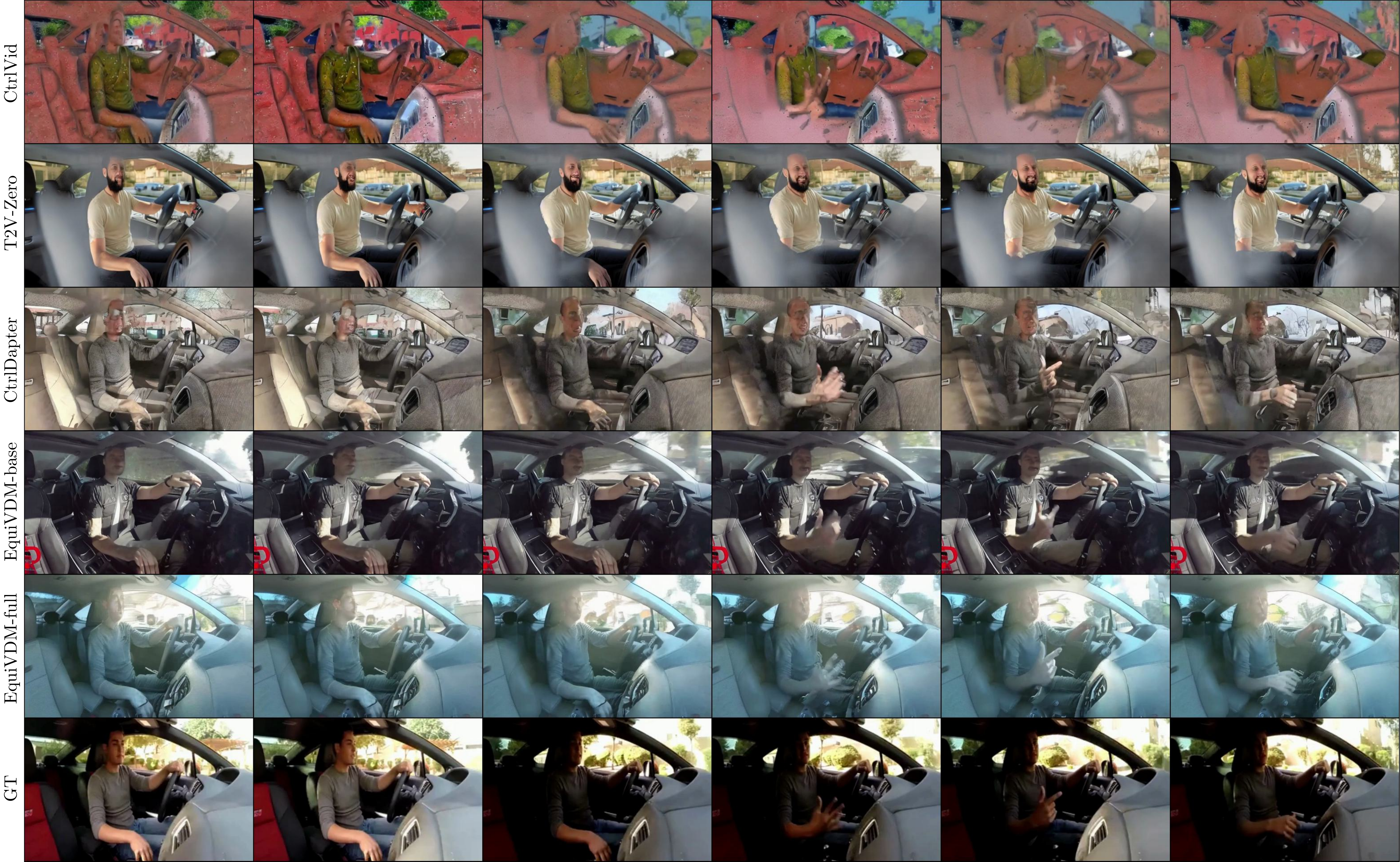}
    \caption{Comparison of EquiVDM with other methods.
    CtrlVid~\cite{chen2023control}, T2V-Zero~\cite{khachatryan2023text2video},
    CtrlAdapter~\cite{lin2024ctrl} and EquiVDM-full used soft-edge map as control signal for each frame
    along with the text prompt. EquiVDM-base generates videos from warped noise without using dense conditioning.
    }
    \label{fig:appendix-c}
\end{figure*}

\begin{figure*}[!t]
    \centering
    \includegraphics[width=\textwidth]{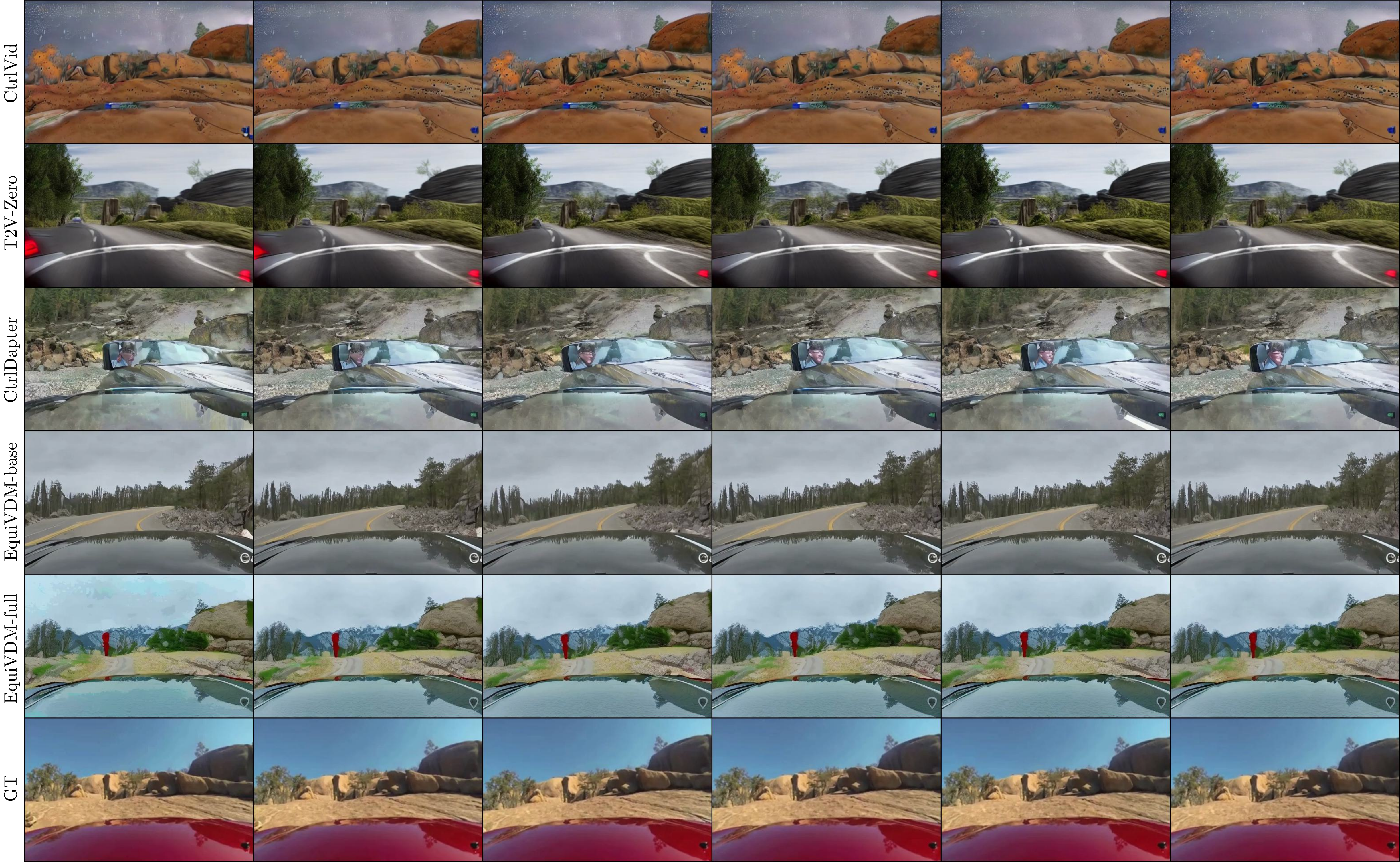}
    \caption{Comparison of EquiVDM with other methods.
    CtrlVid~\cite{chen2023control}, T2V-Zero~\cite{khachatryan2023text2video},
    CtrlAdapter~\cite{lin2024ctrl} and EquiVDM-full used soft-edge map as control signal for each frame
    along with the text prompt. EquiVDM-base generates videos from warped noise without using dense conditioning.
    }
    \label{fig:appendix-d}
\end{figure*}

\begin{figure*}[!t]
    \centering
    \includegraphics[width=\textwidth]{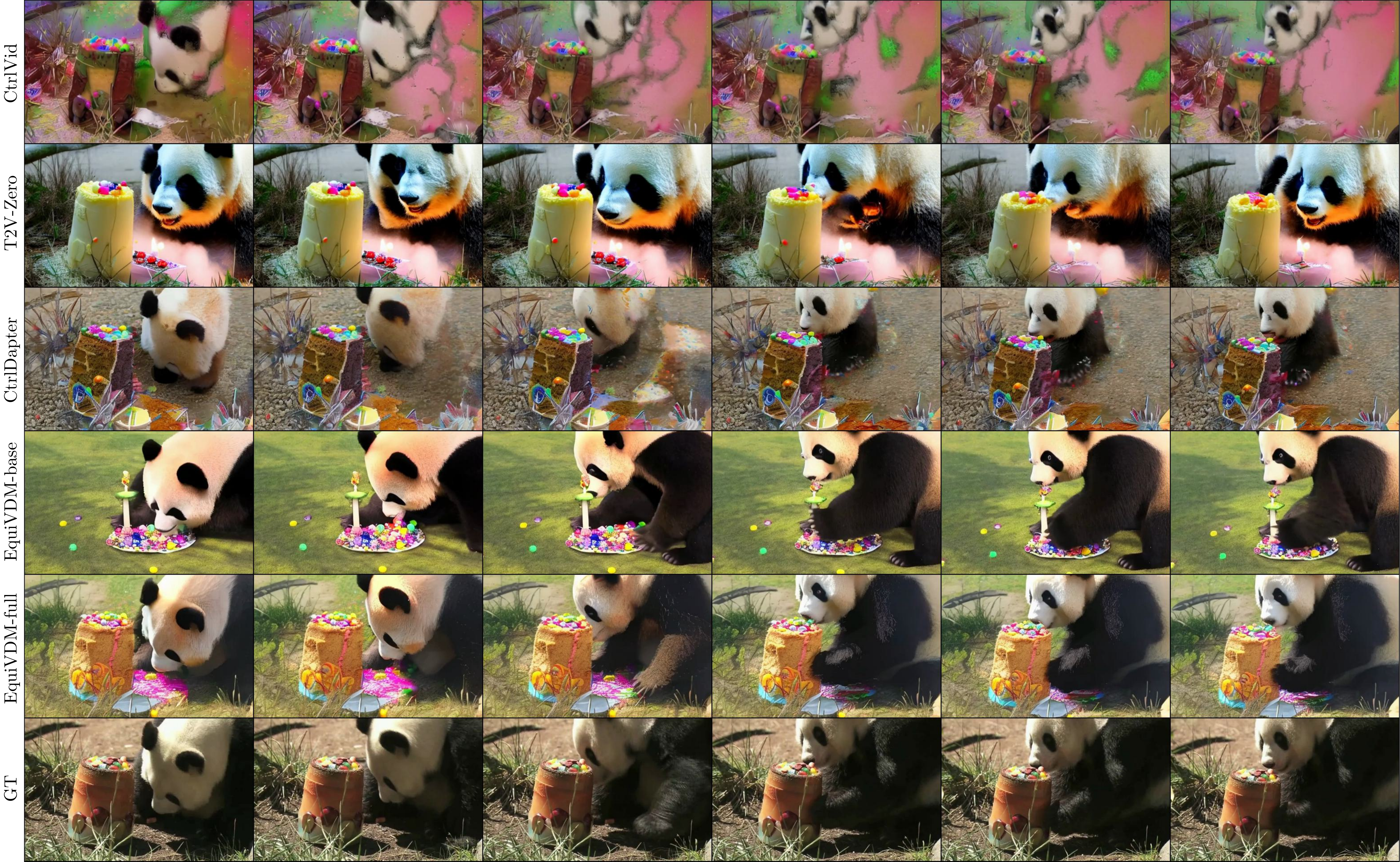}
    \caption{Comparison of EquiVDM with other methods.
    CtrlVid~\cite{chen2023control}, T2V-Zero~\cite{khachatryan2023text2video},
    CtrlAdapter~\cite{lin2024ctrl} and EquiVDM-full used soft-edge map as control signal for each frame
    along with the text prompt. EquiVDM-base generates videos from warped noise without using dense conditioning.
    }
    \label{fig:appendix-e}
\end{figure*}

\begin{figure*}[!t]
    \centering
    \includegraphics[width=\textwidth]{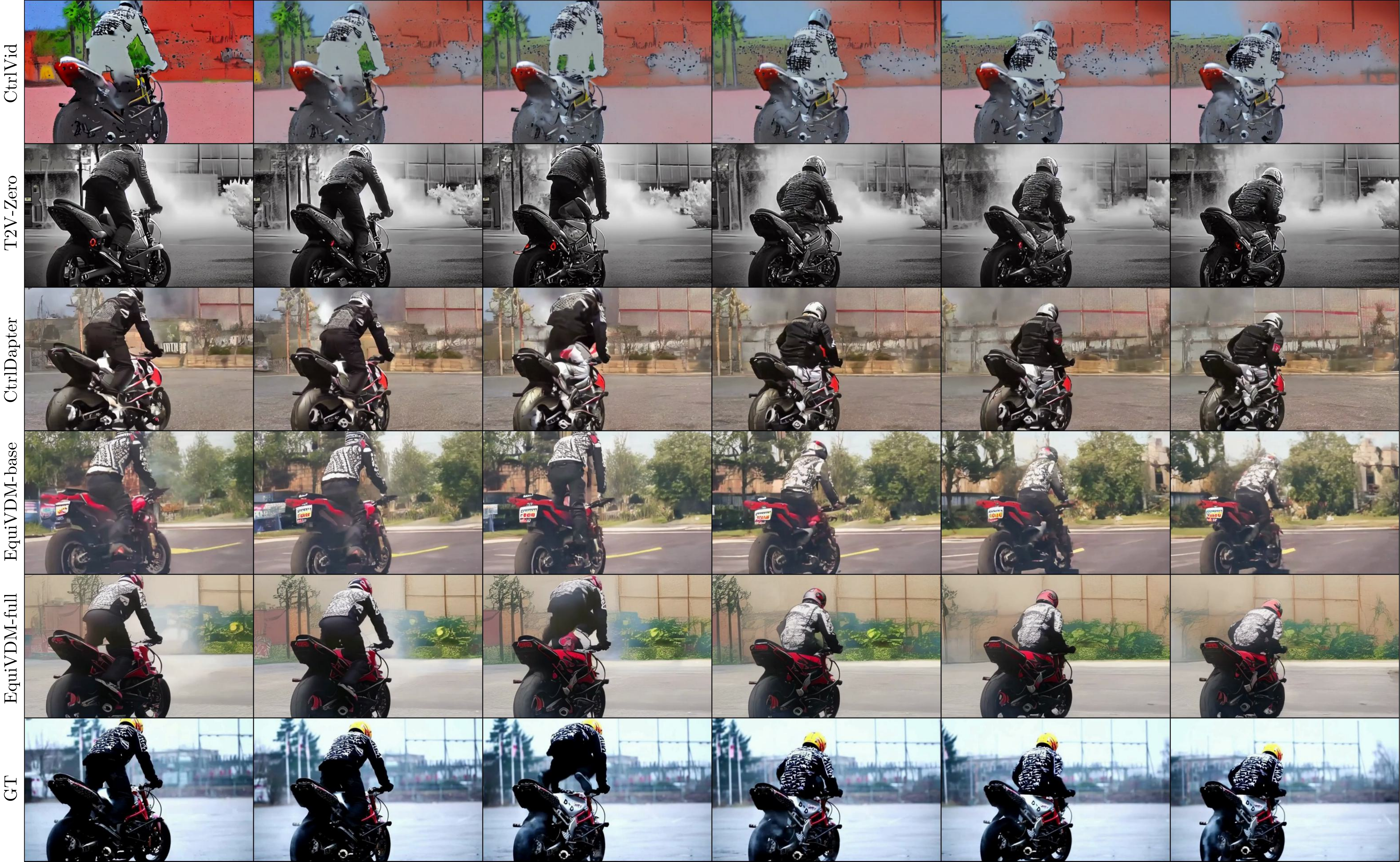}
    \caption{Comparison of EquiVDM with other methods.
    CtrlVid~\cite{chen2023control}, T2V-Zero~\cite{khachatryan2023text2video},
    CtrlAdapter~\cite{lin2024ctrl} and EquiVDM-full used soft-edge map as control signal for each frame
    along with the text prompt. EquiVDM-base generates videos from warped noise without using dense conditioning.
    }
    \label{fig:appendix-f}
\end{figure*}

\clearpage

\end{document}